\definecolor{red}{rgb}{0.93,0.30,0.30}
\definecolor{green}{rgb}{0.30,0.78,0.30}
\newtheorem{lemma}{Lemma}
\title{\textit{Compress to Impress}: Efficient LLM Adaptation Using a Single Gradient Step on 100 Samples}
\author{%
  \textcolor{blue}{Shiva Sreeram}\textsuperscript{1}\thanks{Correspondence: \texttt{sasreera@mit.edu}} \quad
  \textcolor{blue}{Alaa Maalouf}\textsuperscript{2,1} \quad
  \textcolor{blue}{Pratyusha Sharma}\textsuperscript{1} \quad
  \textcolor{blue}{Daniela Rus}\textsuperscript{1} \\[1ex]
  \textsuperscript{1}\textcolor{magenta}{MIT CSAIL}\quad 
  \textsuperscript{2}\textcolor{magenta}{University of Haifa}}
\begin{document}

\maketitle
\vspace{-5mm}
\begin{figure}[h]
    \centering
    \includegraphics[width=0.95\linewidth]{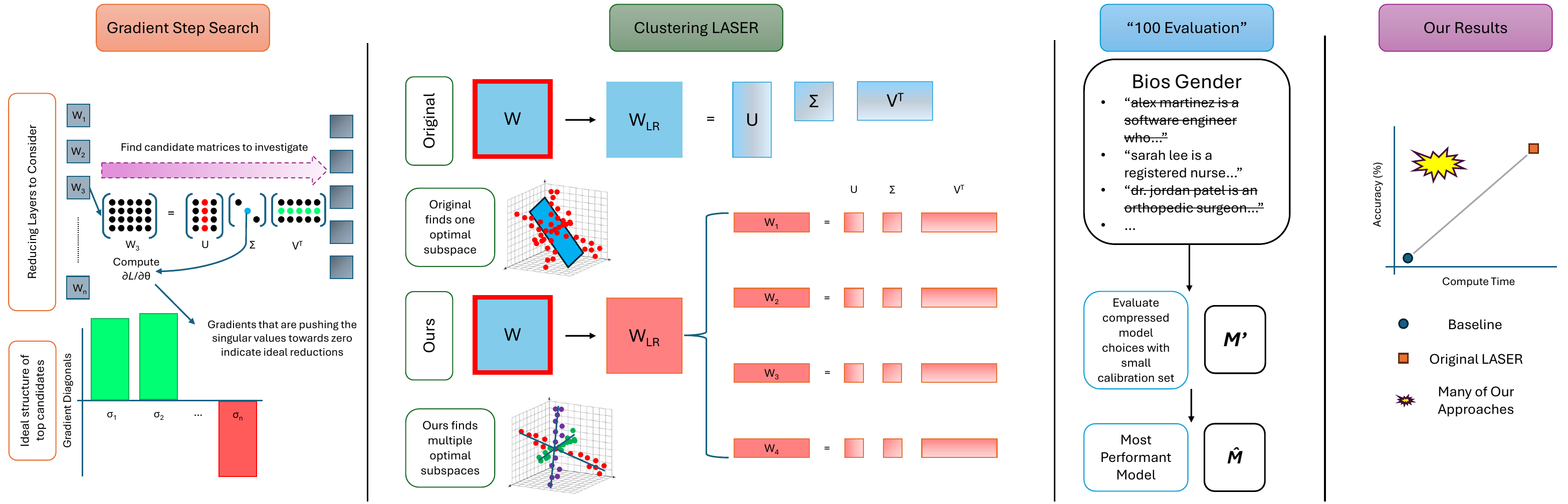}
    \caption{\textbf{Efficient LLM adaptation.} We present a method to adapt LLMs to new styles/domains without fine-tuning. (1) With a single gradient step on the target data, we compute gradients of the singular values across all weight matrices; these gradients rank which matrices merit low-rank compression to curb overfitting and align to the new style. 
    (2) We broaden the search by clustering the rows of the selected matrices into multiple (best fitting) subspaces and factor each cluster, capturing heterogeneous structure and reducing noise/overfitting that manifest differently across row groups. 
    (3) We show that both the gradient scoring and evaluation can be done with just 100 examples. (4) Finally, this yields up to 52× speedups and up to +24.6-point accuracy gains—no fine-tuning required.}
    \label{fig:teaser}
\end{figure}
\vspace{-3mm}
\begin{abstract}
\vspace{-3mm}
Recently,~\cite{sharmatruth} suggested a method called LAyer- SElective-Rank reduction (LASER) which demonstrated that pruning high‑order components of carefully chosen LLM’s weight matrices can boost downstream accuracy—without any gradient‑based fine‑tuning. Yet LASER’s exhaustive, per‑matrix search (each requiring full‑dataset forward passes) makes it impractical for rapid deployment.  We demonstrate that this overhead can be removed and find that: (i) Only a small, carefully chosen subset of matrices needs to be inspected—eliminating the layer‑by‑layer sweep, (ii) The gradient of each matrix’s singular values pinpoints which matrices merit reduction, (iii) Increasing the factorization search space by allowing matrices rows to cluster around multiple subspaces and then decomposing each cluster separately further reduces overfitting on the original training data and further lifts accuracy by up to 24.6 percentage points, and finally, (iv) we discover that evaluating on just 100 samples rather than the full training data—both for computing the indicative gradients and for measuring the final accuracy—suffices to further reduce the search time; we explain that as adaptation to downstream tasks is dominated by prompting style, not dataset size.  As a result, we show that combining these findings yields a fast and robust adaptation algorithm for downstream tasks. Overall, with a single gradient step on 100 examples and a quick scan of the top candidate layers and factorization techniques, we can adapt LLMs to new datasets—entirely without fine‑tuning.  
\end{abstract}
\vspace{-5mm}
\section{Introduction}
Transformer‑based large language models (LLMs) have rapidly become the backbone of modern natural‑language systems, scaling from hundreds of millions to billions or trillions of parameters and achieving remarkable zero‑shot and few‑shot performance across a wide range of tasks~\cite{brown2020language,touvron2023llama}. Despite their success, adapting these models to domain‑specific data remains expensive: standard fine‑tuning requires back‑propagation through all parameters, large GPU clusters, and hundreds of gradient steps. Even parameter‑efficient methods such as LoRA~\cite{hu2022lora} and prompt‑tuning~\cite{lester2021power} still incur non‑negligible compute and storage overhead when multiple tasks or domains must be supported simultaneously.

A complementary approach is to explore post‑training interventions that modify a pretrained model without gradient‑based optimization. Lately, LAyer‑SElective-Rank reduction (LASER) of~\citet{sharmatruth} provided a striking result: simply pruning higher‑order components of carefully chosen weight matrices can increase downstream accuracy—no additional data, optimizers, or training epochs required. Unfortunately, LASER’s exhaustive, per‑matrix search demands a large‑dataset forward pass per matrix in every layer, making it impractical for rapid deployment or on‑device adaptation.

\textbf{Our contribution.} In this paper we revisit LASER through the lens of efficiency. Our key insight is that the matrices most responsible for task adaptation can be identified without an exhaustive sweep. By computing the gradient of each matrix’s singular values, without ever updating model weights, on a small validation subset (100 data points only each computed just once) and allowing rows to be decomposed around multiple subspaces, we both narrow the layers search space and unlock richer factorizations that further reduce overfitting.
Guided by these observations, we develop a fast, sample‑efficient adaptation algorithm that needs only a single gradient step on roughly 100 examples and a quick scan of a handful of candidate matrices.

Our contributions offers three key findings and a complete algorithm they enable:

\begin{enumerate}
    \item \textbf{Gradient‑guided matrix selection.} We show that the gradient of singular values reliably pinpoints which weight matrices merit reduction, eliminating the layer‑by‑layer sweep required by LASER.
    
     \item \textbf{Sample‑efficient evaluation.} We demonstrate that 100 labeled examples suffice for both gradient estimation and accuracy checks, i.e., can be used as the full given training data, indicating that adaptation quality is dominated by prompting style rather than dataset size.
     
    \item \textbf{Multi‑subspace factorization.} Clustering matrix rows into several subspaces and then performing rank reduction within each cluster enlarges the factorization search space and thus further mitigates overfitting, raising benchmark accuracy by up to 24.6 percentage points.
    
   
    \item \textbf{Adapting LLMs. }Taken together, these findings yield a lightweight, training‑free pipeline for adapting pretrained LLMs to new domains efficiently on a single GPU. 
\end{enumerate}

In short, with just one gradient step on 100 examples—and a rapid check of the most promising layers and factorization schemes—LLMs can be adapted to new datasets without any fine‑tuning. We hope these findings and this approach broadens the practical reach of LLMs—particularly in settings where compute, bandwidth, or labeled data are scarce.


\section{Related work}
\vspace{-0.3cm}
To our knowledge,~\citet{sharmatruth} were the first to show that \emph{targeted} rank-reduction of weight matrices can {improve} LLMs accuracy on downstream-tasks. 
Nevertheless, three established research streams are highly relevant:
(i) how large language models internally encode factual knowledge, (ii) how over-parameterized networks can be compressed without sacrificing performance, and (iii) how to adapt LLMs to down stream tasks.

\textbf{How facts are stored.}
Early probing work~\citep{ettinger-etal-2016-probing,Adi2016FinegrainedAO,Hupkes2018-sb,conneau-etal-2018-cram} suggests that factual attributes are distributed across layers. 
One influential hypothesis posits that entity-specific information is cached in two-layer key–value memories inside MLP blocks~\citep{Geva2021} and then propagated forward by self-attention~\citep{elhage}.  
Evidence for this locality comes from interventions that locate and overwrite such memories to produce counter-factual responses~\citep{meng2022locating}, as well as from “early-exit” behaviour, where intermediate representations alone suffice for correct generation \citep{Zhao2021of}.  
Conversely, \citet{Hase2023DoesLI} show that editing multiple layers is required to alter answers involving overlapping entities, hinting at a more fragmented, cross-layer storage scheme.  We do not adjudicate between these views; instead, we rely on the observation that high-rank components often act as noise and that retaining only low-rank structure can surface the correct answer.

 \textbf{Aligning LLMs to downstream tasks.} Probing studies have shown that LLMs are not world models for any given task, necessitating rapid adaptation processes~\cite{qi2023limitation,zhao2025we,sreeram2025probing}. 
The earliest strategy for LLMs alignment was full supervised fine-tuning on task data \citep{radford2019language,brown2020language}, but computational cost motivated \textbf{parameter-efficient} techniques such as adapters \citep{houlsby2019parameter}, IA$^3$ \citep{liu2022few}, prefix-tuning \citep{li2021prefix}, prompt-tuning \citep{lester2021power}, \mbox{P-Tuning v2} \citep{liu2022p}, and low-rank adaptation (LoRA) \citep{hu2022lora}, all of which update \(\ll\!\!1\%\) of the weights.  
A complementary line of work, \textbf{instruction tuning}, aligns models via supervised fine-tuning on diverse (instruction, response) pairs, improving zero-shot generalization \citep{weifinetuned,chung2024scaling,sanh2022multitask}.  Coverage can be expanded almost for free with synthetic data generators such as Self-Instruct \citep{wang2023self} or Alpaca \citep{taori2023alpaca}. Learning based on LLM or VLM features has also become a growing trend for adapting these representations to downstream tasks~\cite{chahine2024flex,maalouf2024follow,wang2024drive,wang2025generalizable}.
Beyond supervised objectives, \textbf{reinforcement learning from human feedback} trains a reward model from pairwise preferences and optimizes it with RL \citep{stiennon2020learning,ouyang2022training,ziegler2019fine}; recent variants like Direct Preference Optimization (DPO) \citep{rafailov2023direct} and SteerLM \citep{dong2023steerlm} replace unstable policy-gradient updates with simple classification or attribute-conditioned losses.  
At inference time, prompt engineering—including chain-of-thought and zero/few-shot prompting—offers a parameter-free alignment layer \citep{wei2022chain,kojima2022large}.   

\textbf{Compressing neural networks by pruning.} \textit{Unstructured pruning} methods trim networks by zeroing individual weights while aiming to preserve each layer’s output~\citep{lecun1990optimal}.  Some embed sparsity into training via constraints or regularizers~\citep{lebedev2016fast,dong2017learning,iandola2016squeezenet,aghasi2017net,lin2017runtime}.  Others prune \emph{post hoc}, dropping weights below a magnitude threshold~\citep{Han15,renda2020comparing,guo2016dynamic}.  Data-aware schemes rank weights using loss or activation statistics from a mini-batch~\citep{baykal2018datadependent,sipp2019,gamboa2020campfire,Lin2020Dynamic,molchanov2016pruning,molchanov2019importance,yu2018nisp}.  For broader reviews, see~\citep{gale2019state,blalock2020state}.  On the other hand, \textit{Structured pruning}  removes whole channels, neurons, or filters, these methods cut memory usage and speed up inference on any hardware~\citep{li2019learning,luo2018autopruner,tukan2022pruning}.  A wide range of strategies has been proposed~\citep{liu2019metapruning,li2019learning,chen2020storage,he2019filter,dong2017more,kang2020operation,ye2020good,ye2018rethinking}, most of which assign each filter an importance score—either weight-based~\citep{he2017channel,he2018soft} or data-driven~\citep{maalouf2021unified,liebenwein2020provable}—and prune those falling below a threshold. Notably, these methods aim to maintain the model's accuracy, typically by applying fine-tuning after compression. Many implementations iterate this prune-and-fine-tune cycle, incurring multiple costly retraining rounds~\citep{renda2020comparing}. Finally, we note that it is common in literature in this domain to have reductions in accuracy when improving runtime. This can be seen for papers that perform model compression~\citep{baykal2018datadependent}, as well as for compressing datasets~\citep{wang2018dataset, killamsetty2021grad}.


\textbf{Low-rank approximations.} Layer compression can also be achieved by factorizing a heavy layer into several low-rank components~\cite{Denton14,jaderberg2014speeding,maalouf2020deep,maalouf2022deep2,kim2015compression,tai2015convolutional,ioannou2015training,alvarez2017compression,tukan2021no,yu2017compressing,lebedev2014speeding,liebenwein2021compressing}.
Complementary methods rely on weight sharing, random projections, or feature hashing to shrink parameter counts~\cite{Weinberger09,tukan2021no,Chen15Hash,Chen15Fresh,ullrich2017soft}. Closest to our work, \citet{maalouf2020deep} iteratively solve a projection-clustering objective to decompose LLM embedding layers.  \citet{liebenwein2021compressing} extend this idea by distributing a size budget network-wide and applying multiple SVDs per layer, enabling whole-model compression.  We build on the multi-subspace view, but re-propose to remove noise around each subspace rather than merely for reducing parameters. 



\section{Method}\label{sec:method}

Our first goal is to identify \emph{which} weight matrices in the given LLM should be compressed to improve its capabilities on a new dataset—without any gradient-based fine-tuning.

\begin{wrapfigure}{r}{0.50\textwidth}

  \begin{minipage}[t]{\linewidth}
  \scriptsize     
    \begin{algorithm}[H]
    \caption{\textsc{Block-First Gradient Low-Rank Adaptation}}
    \label{alg:gla2}
    \KwIn{LLM $\mathcal{M}$ with weights $\{W^\ell\}_{\ell=1}^{L}$; calibration set $\mathcal{D}$; row-clusters $K$; target block rank $j$; matrices to compress $q$.}
    \KwOut{Compressed model $\widehat{\mathcal{M}}$}
    
    \textbf{1.~Back-prop on $\mathcal{D}$: }\ForEach{$(x,y)\!\in\!\mathcal{D}$}{%
      $L\!\leftarrow\!\mathrm{loss}(\mathcal{M}(x),y)$; back-prop; $G^\ell{+}{=}\partial L/\partial W^\ell\,\,\forall\ell$}
    
    \textbf{2.~Score matrices: }\ForEach{$\ell=1{:}L$}{%
      partition $(W^\ell,G^\ell)\!\to\!(W_k,G_k)_{k=1}^K$; $s^\ell\!\leftarrow\!0$;\
      \ForEach{$k=1{:}K$}{%
        $(U,\Sigma,V)\!=\!\text{thinSVD}(W_k)$;\ $\mathbf{g}\!=\!\operatorname{diag}(U^{\!\top}G_kV)$;\
        $s^\ell{+}{=}-\sum_{i=r_k-j+1}^{r_k}\!\min(g_i,0)$}\
      $s^\ell\!\leftarrow\!s^\ell/K$}
    $\mathcal{S}\!\leftarrow\!\text{top-}q$ indices by $s^\ell$.
    
    \textbf{3.~Compress+Evaluate: }\ForEach{$\ell\!\in\!\mathcal{S}$ and $(x, y)\in\mathcal{D}$}{%
      reuse $(W_k)_{1}^K$;\
      \ForEach{$k$}{%
        $(U,\Sigma,V)=\text{thinSVD}(W_k)$; keep top-$j$;\ $\widehat{W}_k\!=\!U\widehat{\Sigma}V^{\!\top}$}\
      $\widehat{W}^\ell\!=\!\text{stack}(\widehat{W}_k)_{k=1}^K$;\ $W^\ell\!\leftarrow\!\widehat{W}^\ell$}
    
    \Return{$\widehat{\mathcal{M}}$}
    \end{algorithm}
  \end{minipage}

  \vspace{-\baselineskip}
\end{wrapfigure}
\textbf{Approach and Motivation.} Instead of attempting to compress all weights, the model first identifies which layers are most critical by analyzing gradients on a small calibration set. Key to our approach is the observation that the \textbf{gradient of each singular value} of a given matrix $W$ already tells us whether the model wishes to \emph{shrink} or \emph{expand} that component.  
If the loss pushes a singular value $\sigma_i$ toward \emph{zero}, the corresponding rank-one direction does \emph{not} contribute to the task or even harms, and can be removed; conversely, a positive push signals that it is useful and should be kept.  
We therefore rank matrices by the magnitude and sign of these \emph{singular-value gradients} and apply low-rank decompositions only where the evidence for shrinkage is strongest. Within each chosen layer, weights are partitioned into blocks, and only the most informative directions are retained through a low-rank approximation; accuracy is computed with the calibration dataset to obtain the most valuable layer compression strategy.


\newcommand{\R}{\mathbb{R}}

\subsection{Gradient of a Singular Value w.r.t.\ the Loss}
\label{subsec:sv-grad}

\textbf{Intuition.}
Let 
  $W \;=\; U \operatorname{diag}(\sigma_1,\dots,\sigma_r)\ V^\top \in \R^{m\times n}, 
  r \;=\; \operatorname{rank}(W),$ 
be the (thin) singular-value decomposition (SVD) of a weight matrix
\(W \in \mathbb{R}^{m\times n}\) that lives inside a given LLM model.
After back-propagation we already have the ordinary matrix gradient
\(G := \partial L / \partial W \in \mathbb{R}^{m\times n}\).
Infinitesimally perturbing the \(i\)-th singular value by
\(\mathrm{d}\sigma_i\) changes the weight by
\(
  \mathrm{d}W \;=\; u_i v_i^\top\,\mathrm{d}\sigma_i
\),
so the chain rule yields the following (i.e.\ a cheap dot-product once \(G\) is known).
\begin{equation}
  \frac{\partial L}{\partial \sigma_i}
  \;=\;
  \langle G,\, u_i v_i^\top \rangle_{\!F}
  \;=\;
  u_i^\top G\, v_i,
\end{equation}

\bigskip
\begin{lemma}[Gradient w.r.t.\ a singular value]
\label{lem:sv-gradient}
Let \(W \in \mathbb{R}^{m\times n}\) have rank
\(r \le \min\{m,n\}\) and a \emph{unique} SVD
\(
  W = U \operatorname{diag}(\sigma_1,\dots,\sigma_r) V^\top
\)
with orthonormal columns
\(U = [u_1,\dots,u_r]\in\mathbb{R}^{m\times r}\) and
\(V = [v_1,\dots,v_r]\in\mathbb{R}^{n\times r}\),
whose singular values satisfy
\(\sigma_1 > \dots > \sigma_r > 0\).
For a continuously differentiable loss
\(L : \mathbb{R}^{m\times n} \to \mathbb{R}\) denote
\(G := \partial L / \partial W\).

\begin{equation}
\frac{\partial L}{\partial \sigma_i}=u_i^{\top}G\,v_i,
\qquad
\text{equivalently}\quad
\frac{\partial L}{\partial \boldsymbol{\sigma}}
=\operatorname{diag}\!\bigl(U^{\top}GV\bigr)\in\mathbb{R}^{r}.
\end{equation}
where \(\operatorname{diag}(\cdot)\) extracts the diagonal of its
square argument.
\end{lemma}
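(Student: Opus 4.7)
The plan is to prove the lemma by direct application of the chain rule to the rank-one expansion of $W$. First I would note that the thin SVD gives the algebraic identity $W=\sum_{k=1}^{r}\sigma_k u_k v_k^{\top}$, which exhibits $W$ as a smooth function of the independent scalar parameters $\sigma_1,\dots,\sigma_r$ together with the orthonormal factors $U,V$. Since $L$ is a function of $W$ alone, differentiating along the coordinate direction $\sigma_i$ (with $U,V$ held fixed) yields $\partial W/\partial \sigma_i = u_i v_i^{\top}$ by term-by-term differentiation, because every other summand is independent of $\sigma_i$.

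Next I would invoke the definition $G=\partial L/\partial W$ and the matrix chain rule to write
\[
\frac{\partial L}{\partial \sigma_i}
\;=\;\Bigl\langle G,\ \frac{\partial W}{\partial \sigma_i}\Bigr\rangle_{F}
\;=\;\langle G,\ u_i v_i^{\top}\rangle_{F}.
\]
Expanding the Frobenius inner product as $\operatorname{tr}(G^{\top} u_i v_i^{\top}) = v_i^{\top} G^{\top} u_i = u_i^{\top} G\, v_i$ establishes the scalar identity. The vector form then follows immediately by observing that $(U^{\top} G V)_{ii}=u_i^{\top} G v_i$, so stacking the $r$ partial derivatives into a single vector produces $\operatorname{diag}(U^{\top}GV)$.

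The only conceptually delicate point I would address explicitly is why $\sigma_i$ is a legitimate smooth coordinate in a neighborhood of $W$. This is where the strict ordering hypothesis $\sigma_1>\dots>\sigma_r>0$ enters: it ensures the SVD is unique up to signs, so standard singular-value perturbation theory (e.g.\ the analytic implicit function theorem applied to the characteristic system $W v_i=\sigma_i u_i$, $W^{\top} u_i=\sigma_i v_i$, $\|u_i\|=\|v_i\|=1$) guarantees that $\sigma_i$ and the columns $u_i,v_i$ depend analytically on $W$ near the given point. Equivalently, one can regard the factored expression $W(\sigma_1,\dots,\sigma_r,U,V)$ as a local parameterization, in which case the computation above is just a partial derivative in that chart.

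I expect no real obstacle: the result is essentially a one-line chain rule once the setup is fixed. The subtlest piece is framing the perturbation so the reader accepts that $U,V$ can be held constant while $\sigma_i$ varies; the distinctness assumption guarantees this is well-posed, whereas in the degenerate case repeated singular values would generate a rotation freedom in the corresponding singular subspace and the per-$\sigma_i$ derivative would have to be replaced by a block-level object. I would mention this briefly to motivate the hypothesis, but would not elaborate since the lemma as stated assumes strict separation.
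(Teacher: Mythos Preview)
Your proposal is correct and follows essentially the same approach as the paper: rank-one expansion $W=\sum_k \sigma_k u_k v_k^{\top}$, the partial $\partial W/\partial\sigma_i = u_i v_i^{\top}$, the Frobenius chain rule $\langle G, u_i v_i^{\top}\rangle_F = u_i^{\top} G v_i$, and stacking to obtain $\operatorname{diag}(U^{\top}GV)$. Your added remarks on smoothness of $\sigma_i$ under the strict-ordering hypothesis are a welcome bit of extra care that the paper's proof leaves implicit.
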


\begin{proof}
Decompose \(W\) into its rank-one terms:
$
  W
  \;=\;
  \sum_{k=1}^{r} \sigma_k\, u_k v_k^\top.
$ 
Because \(u_k\) and \(v_k\) do not depend on \(\sigma_i\),
the Fréchet derivative of \(W\) w.r.t.\ \(\sigma_i\) is
\(
  \partial W / \partial \sigma_i = u_i v_i^\top.
\)
Using the Frobenius inner product
\(\langle A,B\rangle_{\!F} = \operatorname{tr}(A^\top B)\),
\begin{equation} 
  \frac{\partial L}{\partial \sigma_i}
  \;=\;
  \langle G,\, u_i v_i^\top \rangle_{\!F}
  \;=\;
  \operatorname{tr}\!\bigl(G^\top u_i v_i^\top\bigr)
  \;=\;
  u_i^\top G\, v_i.
\end{equation}

Stacking these equalities for all \(i\) yields the vector form.
\end{proof}
\vspace{-\baselineskip}

\bigskip
\textbf{Practical recipe.} Run the usual backward pass to obtain \(G = \partial L/\partial W\). Then, compute (or reuse) the thin SVD of \(W\) to get \(U,\Sigma,V^\top\). Now, evaluate the score vector
        \(\boldsymbol{g} \leftarrow \operatorname{diag}\!\bigl(U^\top G V\bigr)\). Finally, interpret each \(g_i\):
        large negative values suggest pushing \(\sigma_i\) to \(0\) (prune);
        positive values argue for keeping or enlarging the component. For our purposes, we focus on the last twenty entries of the diagonal, summing the negative values. The matrices that on average have the most negative values in this sum are considered for rank-reduction.



\subsection{A \textit{tiny} set is enough for evaluation  and gradient calculation}
\label{subsec:tiny-eval}

\textbf{What matters when we “adapt”.}
Our goal is \emph{not} to re-train the language model on the full
distribution of task inputs; instead, we merely want to
\emph{identify}—via the gradients from §\ref{subsec:sv-grad}—the few
weight directions that must adjust so the model follows the
\textbf{prompt / question-answering format} of the new domain
( changes in phrasing, answer style, or topic focus).

\textbf{Capturing structure, not statistics.}
Such formatting cues appear \emph{repetitively} across the dataset,
whereas fine-grained content varies from example to example.
Consequently,

\begin{itemize}
  \item The gradient signal that tells us “which directions to prune or
        keep” saturates after seeing only a handful of distinct prompts.
  \item Evaluating the \emph{relative} merit of two low-rank
        decompositions also stabilizes quickly; both will answer most
        prompts similarly as samples have similar template.
\end{itemize}

\textbf{Practical rule based on our findings.}
Through what we found, being particularly showcased in Table~\ref{tab:gptj-eff}, unless the target domain is extremely broad (e.g.\ open-domain QA),
sample \(\approx\!100\) representative prompt–response pairs:

\begin{enumerate}
  \item Run one forward/backward pass to obtain the gradients used in Section~\ref{subsec:results}
  \item Evaluate candidate decompositions on the \textit{same} 100
        examples; pick the best and stop.
\end{enumerate}

This protocol preserves downstream gains while turning into a minute-scale operation on one GPU.
\subsection{Denoising LLMs layers with multiple subspaces/SVDs factorizations}
\label{subsec:cluster-svd}

\textbf{Why one global subspace may be too crude.}
A thin SVD fits \emph{all} rows of a matrix \(W\in\R^{m\times n}\) with a
\emph{single} low-dimensional subspace.  
Implicitly, we assume that every row vector
\(w_{i:}\!\in\!\R^{n}\) is just a noisy sample drawn around the same
global subspace
\(\mathcal{S}\subseteq\R^{n}\).
But weight matrices that have survived large-scale pre-training often
mix several kinds of features—syntax versus semantics in language
models, locality versus global context in vision models, \emph{etc.}
Empirically, their rows tend to \textbf{cluster} into multiple subspaces
\(\mathcal{S}_1,\dots,\mathcal{S}_K\).
Now recall our goal: remove the \emph{overfitting noise} that is
irrelevant—sometimes even harmful—for the downstream task in order to improve the reasoning in this specific task.  
If each cluster overfits \emph{independently} (e.g.\ because it captures
different token types or image patterns), then the unwanted variation (overfitting/data noise) is
\emph{also} clustered.  
Forcing a \emph{single} SVD to erase that noise means compromising the
clean directions of \emph{all} clusters at once; the decomposition
either prunes too softly (retains noise) or too aggressively (discards
useful structure). 

Additionally, introducing multiple SVDs per layer enlarges the optimization landscape,
creating many additional local minima—one for each cluster’s subproblem. Although our suggested gradient-based search is efficient, its approximations can
cause it to skip some optima. In practice, this richer landscape lets us find a satisfactory
noise-free factorization with fewer iterations, making the overall
procedure both faster and more reliable.

\textbf{Multiple-subspace hypothesis.}
We therefore adopt the working hypothesis:

\begin{quote}
  \emph{Rows of a weight matrix are drawn from a \underline{mixture of
  low-dimensional subspaces}.  
  Overfitting manifests as small singular directions
  \emph{within} each subspace rather than across the whole matrix.  
  Expanding the decomposition search space from a single to multiple
cluster-specific SVDs enlarges the search
space and populates it with more minima.  With more “good” minima
available, our approximate gradient search is more likely to reach a clean, noise-removing factorizations}
\end{quote}

Under this view, the right granularity for pruning is \emph{per cluster}, not per matrix. This hypothesis is showcased with the results in Table~\ref{tab:clustering-perf} where we see improvements in accuracy upon the original approach and with Table~\ref{tab:gptj-clustering-eff} we maintain some of the improvement gains while being $52\times$ faster.

\textbf{Projective clustering (multiple-subspace clustering).} Extending a single SVD to the setting of multiple subspaces is formalized through \emph{projective clustering}, where the data points are
partitioned and each subset is approximated by its own low-rank subspace. Specifically, we would find
      \(K\) low-dimensional subspaces
      \(\mathcal{S}_1,\dots,\mathcal{S}_K\subset\mathbb{R}^n\),
      each of dimension \(d\), that minimize  the total squared distance from every row in the decomposed matrix to its nearest subspace: 
      \begin{equation}
        \sum_{i=1}^{m}\;
        \min_{k\in [K]}
        \bigl\|\,w_{i:}-\Pi_{\mathcal{S}_k}(w_{i:})\bigr\|_2^{2},\label{eq:pc}
      \end{equation}
      where $[K]=\{1,\cdots,K\}$ and $\Pi_{\mathcal{S}_k}(w_{i:})$ is the projection of the row $^i$th $w_{i:}$ on the subspace ${\mathcal{S}_k}$.
      Unfortunately, this problem is NP-hard, and even its fastest
      approximate solvers are far too slow for our “one-pass” efficient setting.


\textbf{Practical shortcut.}   Instead of running a costly clustering routine, we adopt a near-zero-cost heuristic that preserves most of the benefit:  \emph{block splitting}.  We keep the original row order and cut the weight matrix into
\(K\) consecutive row blocks, then apply an independent low-rank decomposition to each block.  Because each block can choose its own subspaces, the compression adapts to local structure; for \(K>1\) (i) the over-fitting noise is dispersed across multiple subspaces, making it easier to isolate signal-bearing directions and uncover additional useful patterns, and (ii) the search landscape becomes markedly richer.  
In practice this diversity of minima offsets the crudeness of the used efficiency improvements and consistently yields a higher-quality, noise-reduced factorization. Despite its simplicity, block splitting already achieves accuracy gains (see
Section~\ref{sec:experiments}) and recover the small accuracy losses caused by our efficiency improvements techniques.

\textbf{Overall  recipe.} Given a matrix $W$ we wish to compress and evaluate its improvements, a number of clusters parameter $K\geq 1$, and a target  rank $j$. To compress $W$: (i) split \(W\) into \(K\) consecutive row blocks \(W_1,\cdots,W_K\in\R^{m_k\times n}\) that preserve the original ordering; (ii) compute the thin SVD of each block $k\in\{1,\cdots,K\}$, \(W_k = U_k\Sigma_k V_k^\top\); (iii) form a rank-\(j\) approximation by zeroing all but the \(j\) largest singular values in \(\Sigma_k\) and setting \(\widehat{W}_k = U_k\widehat{\Sigma}_k V_k^\top\); (iv)  stack the \(\widehat{W}_k\) blocks back together in their original order to obtain the compressed matrix \(\widehat{W} = [\,\widehat{W}_1,\dots,\widehat{W}_K\,]\), which replaces \(W\) in downstream evaluation. Our methodology is encapsulated in Algorithm~\ref{alg:gla2}.


\textbf{Adapting the gradients approach.} To get inspired by the gradients which layers to compress, compute the \emph{cluster-specific} singular-value gradients via   \(\boldsymbol{g}_k = \operatorname{diag}(U_k^\top G_k V_k)\), where \(G_k\) is the matching slice of the global gradient \(G=\partial L/\partial W\). Then \textbf{Rank} the matrices in the model by defining a function based on those gradients to know which layers should be compressed.

\section{Experimental Results}

\label{sec:experiments}
\begin{table}[h]
\centering
\scriptsize
\caption{GPT‑J evaluation with multi-subspace rank reduction (accuracy \% and speedup). 100 Grads employs gradient diagonal computation to score the matrices with just 100 datapoints. Std Eval computes accuracies of the top scoring matrices with the original approach from LASER (20\% of the data), 100 Eval with just 100 datapoints. The final accuracy is reported with 80\% of the data.}
\resizebox{0.86\columnwidth}{!}{%
\begin{tabular}{l cc cc cc}
\toprule
\multirow{3}{*}{Dataset} &
\multirow{3}{*}{Baseline} &
\multirow{3}{*}{LASER} &
\multicolumn{2}{c}{\makecell{Clustering LASER \\100 Grads Std Eval\\(ours)}} &
\multicolumn{2}{c}{\makecell{Clustering LASER \\100 Grads 100 Eval\\(ours)}} \\
\cmidrule(lr){4-5}\cmidrule(lr){6-7}
& & & Acc & Speedup & Acc & Speedup \\
\midrule
CounterFact                       & 13.1 & 24.0 & \textbf{24.4} & 1.98x & 24.2 & \underline{93.4x} \\
HotPotQA                          & 19.6 & 19.5 & \textbf{19.9} & 1.98x & 19.7 & 48.3x \\
FEVER                             & 50.2 & \textbf{56.2}  & 56.0 & 1.96x & 53.3 & 44.7x \\
Bios Gender                       & 70.9 & \textbf{97.5}  & 88.4 & 1.98x & 88.4 & \underline{79.4x} \\
Bios Profession                   & 75.6 & \textbf{82.1} & 80.5 & 1.98x & 77.5 & 56.8x \\
TruthfulQA                        & 54.9 & 55.6  & \textbf{56.1} & 1.97x & 54.9 & 25.2x \\
BigBench–Epistemic Reasoning    & 37.1 & 38.3  & \textbf{62.3} & 1.96x & 62.2 & 9.84x \\
BigBench–WikidataQA             & 51.8 & 65.9 & \textbf{66.5} & 1.98x & \textbf{66.5} & 58.5x \\
\midrule
Average Improvement from Baseline & 0.00 & 8.24 &
  \multicolumn{2}{c}{10.1} &
  \multicolumn{2}{c}{9.19} \\
Average Change from LASER         & -8.24 & 0.00 &
  \multicolumn{2}{c}{1.85} &
  \multicolumn{2}{c}{0.95} \\
Average Speedup                   & -- & -- &
  \multicolumn{2}{c}{1.97x} &
  \multicolumn{2}{c}{52.0x} \\
\bottomrule
\end{tabular}}
\label{tab:gptj-clustering-eff}
\end{table}

In this section, we cover a variety of experiments conducted to emphasize the strengths of techniques introduced in Section~\ref{sec:method}, enabling a speed up in computation time to achieve comparable accuracy, or even improvements. When we refer to parameters, we are considering the following: layer number (28 total for GPT-J~\cite{wang2022gpt} and 12 total for Roberta~\cite{liu2019robertarobustlyoptimizedbert}), layer name (being the in or out matrix of the layer), rate of compression (considering $10\%$, $20\%$, $40\%$, $60\%$, $80\%$, $90\%$, $95\%$, $99\%$, and $99.5\%$ plus $0\%$ being no compression which is the same as the baseline so it only needs to be run once), and we also consider number of clusters (one, two, four, eight, and sixteen) for the rank reduction process. These experiments are conducted on the following datasets: CounterFact~\cite{meng2023locatingeditingfactualassociations}, HotPotQA~\cite{yang2018hotpotqadatasetdiverseexplainable}, FEVER~\cite{thorne2018feverlargescaledatasetfact}, Bios Gender and Profession from Bias in Bios~\cite{De_Arteaga_2019}, TruthfulQA~\cite{lin2022truthfulqameasuringmodelsmimic}, BigBench-Epistemic Reasoning~\cite{bowman-etal-2015-large}, and BigBench-WikidataQA.

\subsection{Improving upon the state of the art (SOTA)}
\label{subsec:results}

We now present the end-to-end results of our Algorithm~\ref{alg:gla2}, which integrates all insights developed in this work. Specifically, the configurations “Clustering LASER + 100 Gradients + Standard Evaluation” (CL-100G-SE) and “Clustering LASER + 100 Gradients + 100-Point Evaluation” (CL-100G-100E)—reported in Tables \ref{tab:gptj-clustering-eff} for GPT-J and \ref{tab:roberta-clustering-eff} for Roberta. Both variants combine key ingredients from our methodology:  (i) apply the multi-subspace hypothesis enabling performance gains upon LASER (Section~\ref{subsec:cluster-svd}). (ii) apply the ``100 Gradients" technique to determine the most suited layers to perform the Clustering LASER process on (Section~\ref{subsec:sv-grad} and~\ref{subsec:tiny-eval}), and (iii) conduct a quick search to find the best parameters given these layers either with the original evaluation process of considering $20\%$ of the data or with our ``100 Evaluation" (Section~\ref{subsec:tiny-eval}) considering 100 datapoints of the $20\%$ in evaluation for choosing best parameters. The final accuracy is reported on the remaining $80\%$ of the data as in~\citep{sharmatruth}. Together, these yield the substantial gains highlighted in the tables.

\textbf{Discussion. } On GPT-J, we see we can improve upon SOTA aided by our clustering process, while having a time computation reduction even over the original LASER despite considering clusters in our evaluation; on average CL-100G-SE is $2\times$ faster the LASER and 1.7\% higher in accuracy, while CL-100G-100E is $52\times$ faster and improve upon LASER by 0.95\%.
Highlighting BigBench-Epistemic Reasoning, we see a massive performance delta, showcasing the value of applying the multi-subspace hypothesis. 
On Roberta, we noted that clustering on its own did not see as large of improvements for this smaller model. However, our final approach of  CL-100G-100E can achieve a large $\approx 20$ times computation speed up while still maintaining a comparable improvement to the baseline as LASER.

\subsection{Ablation of the proposed efficiency improvement techniques}
With the given search space, let us define the computation time of LASER via the number of forward passes. To find the best parameters, LASER involves conducting a check on the accuracy for each set of parameters on $20\%$ of the data, with the final check conducted on $80\%$ of the data on the best parameters found. Note that for $0\%$ compression, only one check needs to run as it is not layer specific. As such, if we consider the different validation and test sizes (here being $20\%$ and $80\%$ of the overall data respectively), the computation time can generally be found as:
\begin{equation}
\text{\# of layers} \times \underset{\text{in/out matrices}}{2}\times\underset{\text{rates}}{9}\times \text{validation size}+ \underset{\text{for $0\%$ compression}}{\text{validation size}} + \text{test size}
\end{equation}

To improve, we start by applying Algorithm~\ref{alg:gla2} (here with $K = \{1\}$) naively, approaching it in a similar manner to the original work: running a gradient check on the first $20\%$ of the data to determine the key matrices. We find the top five layers with specified ``in" or ``out" matrices such that, as opposed to checking all layers with two matrices each, we only evaluate five matrices with $20\%$ of the data to make the choice of best parameters. Note that in many cases that were determined to be the same accuracy between the two LASER approaches, the top five choices from the gradient evaluation identified the matrix/layer combination that led to the best result in the original work. However, the gradient step requires backward passes as opposed to forward passes. ~\citet{kaplan2020scaling} show an approximate two times compute factor for the backwards versus forwards pass so for the purposes of our work, we will bound the compute by a factor of $2.5$. Therefore, we have the computation time:
\begin{equation}
\underset{\text{gradient step search}}{2.5 \times \text{validation size}}+\underset{\text{top choices}}{5} \times \underset{\text{in/out matrices}}{2}\times\underset{\text{rates}}{9}\times \text{validation size}+ \underset{\text{for $0\%$ compression}}{\text{validation size}} + \text{test size}
\end{equation}

In Table~\ref{tab:gptj-eff}, we see that performance is maintained for a majority of datasets with $\approx10$ times speedup.

\begin{table}[t]
\centering
\scriptsize
\caption{Roberta evaluation with multi-subspace rank reduction (accuracy \% and speedup). 100 Grads employs gradient diagonal computation to score the matrices with just 100 datapoints. Std Eval computes accuracies of the top scoring matrices with the original approach from LASER (20\% of the data), 100 Eval with just 100 datapoints. The final accuracy is reported with 80\% of the data.}
\resizebox{0.85\columnwidth}{!}{%
\begin{tabular}{l cc cc cc}
\toprule
\multirow{3}{*}{Dataset} &
\multirow{3}{*}{Baseline} &
\multirow{3}{*}{LASER} &
\multicolumn{2}{c}{\makecell{Clustering LASER \\100 Grads Std Eval\\(ours)}} &
\multicolumn{2}{c}{\makecell{Clustering LASER \\100 Grads 100 Eval\\(ours)}} \\
\cmidrule(lr){4-5}\cmidrule(lr){6-7}
& & & Acc & Speedup & Acc & Speedup \\
\midrule
CounterFact                       & 17.3 & \textbf{19.3}  & \textbf{19.3} & 0.86x & 18.3 & \underline{36.8x} \\
HotPotQA                          & 6.1 & \textbf{6.7} & 6.5 & 0.86x & 6.3 & 17.0x \\
FEVER                             & 50.0 & 52.3 & \textbf{52.7} & 0.86x & \textbf{52.7} & 15.7x \\
Bios Gender                       & 87.5 & \textbf{93.7} & 93.1 & 0.86x & 92.8 & \underline{30.2x} \\
Bios Profession                   & 64.5 & 72.5 & \textbf{75.1} & 0.86x & \textbf{75.1} & 20.4x \\
TruthfulQA                        & 56.2 & 56.2 & \textbf{56.3} & 0.86x & 56.2 & 8.39x \\
BigBench–Epistemic Reasoning    & 37.1 & \textbf{41.8} & 37.2 & 0.85x & 37.1 & 3.17x \\
BigBench–WikidataQA             & 28.0 & 30.7 & \textbf{32.7} & 0.86x & 31.5 & 21.1x \\
\midrule
Average Improvement from Baseline & 0.00 & 3.31 &
  \multicolumn{2}{c}{3.27} &
  \multicolumn{2}{c}{2.91} \\
Average Change from LASER         & -3.31 & 0.00 &
  \multicolumn{2}{c}{-0.04} &
  \multicolumn{2}{c}{-0.40} \\
Average Speedup                   & -- & -- &
  \multicolumn{2}{c}{0.86x} &
  \multicolumn{2}{c}{22.2x} \\
\bottomrule
\end{tabular}}
\label{tab:roberta-clustering-eff}
\end{table}

\textbf{Reducing the search space of the standard LASER.} Here we conduct an experiment to perform the standard long search of LASER but instead of making our choice based on $20\%$ of the datapoints, we just consider a random hundred of the $20\%$. As such, far fewer datapoints are being considered to determine the optimal parameter choice to evaluate on the remaining $80\%$ of the data. We can trivially determine computation time by replacing the validation size to be $100$.

In Table~\ref{tab:gptj-eff}, we see that for many of the datasets, despite the large computation time delta, we are still performing at a similar level, providing evidence that such a large portion of the dataset is not necessary. Note in the case of BigBench-Epistemic Reasoning, we even see a large performance gain. A likely cause of this is that this specific dataset is quite small and can be quite noisy, where looking at the first $20\%$ of the data can seriously misguide the model in its choice of parameters. Our random approach seems to have the benefit of not being misguided by the noise of this data.


\begin{table}[h]
\centering
\footnotesize
\caption{GPT‑J evaluation on efficient techniques (accuracy \% and speedup). 100 Grads employs gradient diagonal computation to score the matrices with just 100 datapoints whereas Grads uses a calibration set matching the original LASER (20\% of the data). Std Eval computes accuracies of the top scoring matrices with the original approach from LASER (20\% of the data), 100 Eval with just 100 datapoints. The final accuracy is reported with 80\% of the data.}
\resizebox{0.93\columnwidth}{!}{%
\begin{tabular}{l cc cc cc cc cc}
\toprule
\multirow{3}{*}{Dataset} &
\multirow{3}{*}{Baseline} &
\multirow{3}{*}{LASER} &
\multicolumn{2}{c}{\makecell{LASER Grads\\Std Eval\\(ours)}} &
\multicolumn{2}{c}{\makecell{LASER\\100 Eval\\(ours)}} &
\multicolumn{2}{c}{\makecell{LASER 100 Grads\\Std Eval\\(ours)}} &
\multicolumn{2}{c}{\makecell{LASER 100 Grads\\100 Eval\\(ours)}} \\
\cmidrule(lr){4-5}\cmidrule(lr){6-7}\cmidrule(lr){8-9}\cmidrule(lr){10-11}
& & & Acc & Speedup & Acc & Speedup & Acc & Speedup & Acc & Speedup \\
\midrule
CounterFact                       & 13.1 & 24.0 & 24.0 & 9.70x & 23.2 & 64.9x & 24.0 & 10.2x & 23.2 & \underline{116.5x} \\
HotPotQA                          & 19.6 & 19.5 & 19.5 & 9.70x & 19.6 & 23.9x & 19.5 & 10.2x & 19.5 & 90.0x \\
FEVER                             & 50.2 & 56.2 & 55.9 & 9.70x & 50.4 & 21.7x & 55.9 & 10.1x & 50.2 & 86.3x \\
Bios Gender                       & 70.9 & 97.5 & 81.0 & 9.70x & 97.2 & 49.1x & 81.0 & 10.2x & 81.0 & \underline{110.4x} \\
Bios Profession                   & 75.6 & 82.1 & 77.9 & 9.70x & 81.6 & 29.7x & 77.9 & 10.2x & 75.6 & 96.7x \\
TruthfulQA                        & 54.9 & 55.6 & 55.9 & 9.70x & 55.1 & 10.9x & 55.9 & 10.1x & 55.9 & 62.7x \\
BigBench–Epistemic Reasoning    & 37.1 & 38.3 & 38.3 & 9.70x & 62.6 & 3.91x & 38.3 & 10.1x & 62.9 & 31.6x \\
BigBench–WikidataQA             & 51.8 & 65.9 & 65.9 & 9.70x & 66.7 & 31.0x & 65.9 & 10.2x & 66.7 & 98.0x \\
\midrule
Average Improvement from Baseline & 0.00 & 8.24 &
  \multicolumn{2}{c}{5.65} &
  \multicolumn{2}{c}{10.4} &
  \multicolumn{2}{c}{5.65} &
  \multicolumn{2}{c}{7.73} \\
Average Change from LASER         & -8.24 & 0.00 &
  \multicolumn{2}{c}{-2.59} &
  \multicolumn{2}{c}{\phantom{-}2.16} &
  \multicolumn{2}{c}{-2.59} &
  \multicolumn{2}{c}{-0.51} \\
Average Speedup                   & -- & -- &
  \multicolumn{2}{c}{9.70x} &
  \multicolumn{2}{c}{29.4x} &
  \multicolumn{2}{c}{10.2x} &
  \multicolumn{2}{c}{86.5x} \\
\bottomrule
\end{tabular}}
\label{tab:gptj-eff}
\end{table}

\textbf{Evaluation with just 100 gradient steps.} 
Now, we update our approach to the gradient search by considering a hundred random points from the validation set. We  find that the top five proposed matrices remain the same as the original ``LASER Grads Std Eval" given in table~\ref{tab:gptj-eff} so when maintaining the evaluation size of $20\%$ of the data, the resultant accuracies are able to match with more speedup.

Next, if we apply the aforementioned technique of reducing the search space for evaluation, we obtain a very large speedup for computation with validation size of $100$. In Table~\ref{tab:gptj-eff}, we see these speedups, which increases with dataset size, with relatively maintained performance. We further emphasize this point in Figure~\ref{fig:gptj-speedup} where we can see how our approaches perform in relation to the baseline and original LASER where being to the left of the gray line showcases the approaches that led to maintaining accuracy given the computation time reduction. We see that CL-100G-100E (shown with a gold star) is consistently left of the gray line for seven of the datasets, making it our top performer.

\begin{figure}[t]
    \centering
    \centering
    \includegraphics[width=0.92\linewidth]{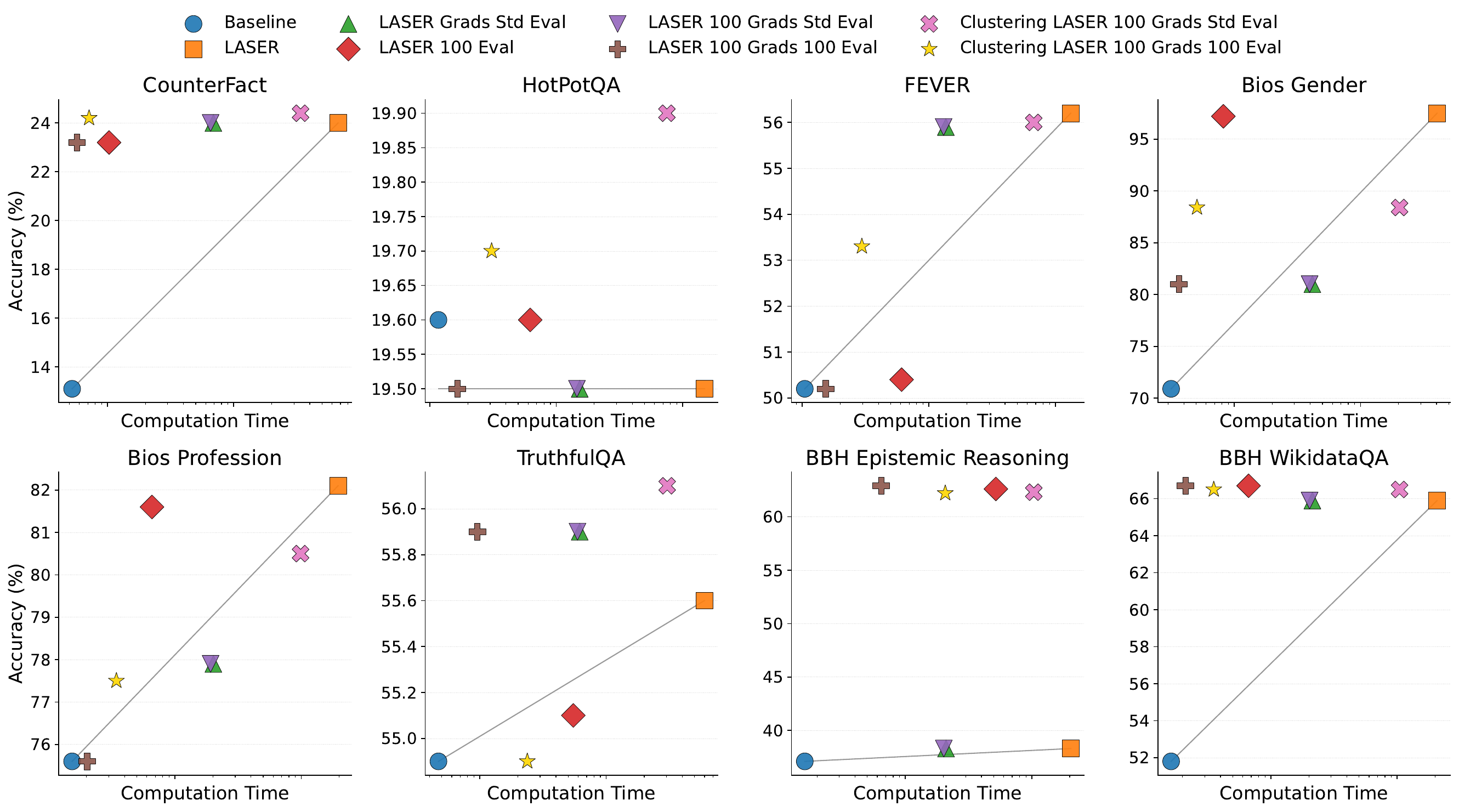}
    \caption{The accuracy of techniques given computation time for eight datasets while running with GPT-J. Line drawn between Baseline and LASER points to highlight ratio of accuracy and compute.}
    \label{fig:gptj-speedup}
\end{figure}

\subsection{Ablation on single subspace vs multi-subspace rank reduction with LASER}
Here, we study the effect of applying clustering and multiple SVDs without incorporating speedup techniques (gradients and subset sampling), in order to isolate how much this component alone improves over the standard LASER method.
We cluster the matrices according to the process described in Algorithm~\ref{alg:gla2}. In addition to having 1 cluster (being the standard LASER process), we also consider 2, 4, 8, and 16 clusters. We obtain the following results in Table~\ref{tab:clustering-perf} from conducting a standard long search on all parameter combinations.
\begin{table}[h]
    \centering
    \footnotesize
    \caption{Accuracy (\%) of performing multi-subspace rank reduction with full search.}
    \resizebox{0.87\columnwidth}{!}{%
    \begin{tabular}{l ccc ccc}
        \toprule
        \multirow{2}{*}{Dataset} &
        \multicolumn{3}{c}{Roberta} &
        \multicolumn{3}{c}{GPT-J} \\
        \cmidrule(lr){2-4}\cmidrule(lr){5-7}
        & Baseline & LASER & Clustering LASER &
          Baseline & LASER & Clustering LASER \\
        \midrule
        CounterFact                       & 17.3 & \textbf{19.3} & \textbf{19.3} & 13.1 & 24.0 & \textbf{24.5} \\
        HotPotQA                          &  6.1 &  6.7 & \textbf{6.8} & 19.6 & 19.5 & \textbf{20.3} \\
        FEVER                             & 50.0 & 52.3 & \textbf{52.7} & 50.2 & 56.2 & \textbf{57.8} \\
        Bios Gender                       & 87.5 & \textbf{93.7} & \textbf{93.7} & 70.9 & 97.5 & \textbf{97.7} \\
        Bios Profession                   & 64.5 & 72.5 & \textbf{75.1} & 75.6 & 82.1 & \textbf{82.3} \\
        TruthfulQA                        & 56.2 & 56.2 & \textbf{56.3} & 54.9 & 55.6 & \textbf{56.1} \\
        BigBench--Epistemic Reasoning     & 37.1 & \textbf{41.8} & \textbf{41.8} & 37.1 & 38.3 & \textbf{62.9} \\
        BigBench--WikidataQA              & 28.0 & 30.7 & \textbf{36.7} & 51.8 & 65.9 & \textbf{66.5} \\
        \midrule
        Average Improvement from Baseline & 0.00 & 3.31
            & 4.46 & 0.00 & 8.24 & 11.9\\
        Average Change from LASER & -3.31 & 0.00
            & 1.15 & -8.24 & 0.00 & 3.63\\
        \bottomrule
    \end{tabular}}
    \label{tab:clustering-perf}
\end{table}
We find improvements upon the original LASER model, achieving even higher accuracy with no training required. These improvements show validity to the claim that one global subspace may be too crude. Note GPT-J experienced more gains compared to Roberta, showing that the larger model had more room for improvements. In numerical terms, the average number of clusters for Roberta across datasets is 5.625 whereas for GPT-J it is 8. As for the percent of the matrix remaining ($\rho$) for Roberta is 63.125\% whereas for GPT-J it is 4.125\%. However, despite improvements, note that with additional clustering levels, we substantially increased the search space: a five times multiplier to the overall search by the number of clusters. As such, we aim to apply efficient techniques.

\textbf{Applying efficient techniques to multi-subspace rank reduction.} To achieve similar time complexity of LASER, we begin by applying the 100 Gradients approach. Here, we also consider the top seven results from the gradients. Also, our previous result showed a stronger preference to clustering for GPT-J and a weaker for Roberta. As such, we consider 2, 4, 8, and 16 clusters for GPT-J while we consider 1, 2, 4, and 8 clusters for Roberta. Therefore, the computation becomes reduced accordingly.

With this, we can see in Tables~\ref{tab:gptj-clustering-eff} and~\ref{tab:roberta-clustering-eff} that we already start the speedup compared to LASER for GPT-J and return to an approximately similar level of compute for Roberta. We can further improve compute time with our ``100 Eval" strategy. Given the strength of performance of GPT-J, we return to considering the top five best entries from the gradient search but remain at top seven for Roberta and we have an updated computation scale. We note that for GPT-J in Table~\ref{tab:gptj-clustering-eff} that even with an average of a $52\%$ speedup in computation for the search, we are able to outperform LASER on average.

\subsection{Ablating the effect of optimized clustering in LASER} We study the impact of explicitly finding clusters that optimize the projective clustering loss in~\eqref{eq:pc}, and evaluate whether this approach performs better than the simple split used for the LASER procedure.
We employ a heuristic to solve the $(j,k)$-projective‑clustering problem (i.e., finding $k$ subspaces, each of dimension $j$ that minimize the summed squared distances of the points to their nearest subspace). Our choice is motivated by the way in which exact optimization, approximation, and coreset methods for projective clustering offer strong theoretical guarantees, but all require prohibitively high-degree polynomial runtimes with large hidden constants, making them unsuitable for real-time or resource-constrained settings (further details in Appendix~\ref{subsec:clustering-heuristic}).

Given these limitations, we adopt the classical K‑subspaces EM‑style algorithm. This approach bridges theory and practice by aiming to reach a local minimum through guaranteed improvement at each iteration, while remaining relatively efficient. Each iteration the algorithm alternates between (i) re‑assigning every point to its nearest current subspace and (ii) recomputing the optimal j-dimensional subspace for each cluster via SVD. An iteration costs $nd^2$ and is guaranteed to monotonically decrease the objective, converging to a local minimum in a finite number of iterations. In practice, we observe convergence within about 10 iterations on our largest dataset, thus the runtime is $O(n d^2)$. 

\begin{table}[h]
\centering
\footnotesize
\caption{Roberta evaluation with clustering (accuracy \%).}
\resizebox{0.75\columnwidth}{!}{%
\begin{tabular}{l c c}
\toprule
\multirow{2}{*}{Dataset} &
\multirow{2}{*}{Clustering LASER} &
\multirow{2}{*}{\makecell{Clustering LASER\\with Optimal Clustering}} \\
& & \\
\midrule
CounterFact                       & 19.3 & 19.3 \\
HotPotQA                          & 6.8  & 6.8 \\
FEVER                             & 52.7 & 53.5 \\
Bios Gender                       & 93.7 & 93.7 \\
Bios Profession                   & 75.1 & 75.1 \\
TruthfulQA                        & 56.3 & 56.3 \\
BigBench–Epistemic Reasoning      & 41.8 & 41.8 \\
BigBench–WikidataQA               & 36.7 & 36.7 \\
\bottomrule
\end{tabular}}
\label{tab:roberta-optclust}
\end{table}

So we have conducted an experiment to test this approach on the Roberta model across all eight datasets. Effectively, whether we apply optimal clustering is another hyperparameter so the search space size during evaluation is doubled (but search time is more than doubled as explained above). The numbers in Table~\ref{tab:roberta-optclust} are comparing these optimal clustering numbers to the Clustering LASER column in Table~\ref{tab:clustering-perf}. So when making the comparison to the column in Table~\ref{tab:clustering-perf}, we see that these numbers match for all datasets except for FEVER where the EM algorithm obtains an accuracy of 53.5\% versus 52.7\%. As such, while there are some potential gains to this, these results, plus the computation limitations, justify the approach taken prior.

\section{Conclusion and Future Work}
We revisit post-training \emph{rank–reduction} as a light-weight method for adapting LLMs to new domains. We find that \textbf{a single gradient step on just \emph{100} examples} recovers provides a robust indication to which layers should be pruned. leveraging three insights: (i) singular-value gradients reliably identify harmful high-rank components, avoiding exhaustive layer sweeps; (ii) adaptation is driven by \emph{formatting cues}, making $\smash{\approx\!100}$ prompt–response pairs sufficient; (iii) clustering matrix rows before SVD expands the solution space and improves generalization, boosting accuracy by up to \textbf{24.6 points}.

\textbf{Empirical impact.} On eight benchmarks and two model families (GPT-J, RoBERTa), our \emph{training-free} pipeline compares to or surpasses LASER’s accuracy, up to \textbf{52$\times$} speedup on a single GPU.

\textbf{Broader significance.} This minute-scale adaptation lowers deployment barriers for LLMs in low-resource settings, showing that \emph{structural} edits, guided by small samples, can rival full fine-tuning.

\textbf{Limitations/outlook.} Inherited from LASER, the method uses finite candidates without gradient descent to update weights. Experiments focused on small, English-only models; future work can scale to full-size, multilingual or retrieval-augmented variants and also explore RLHF interactions.

\section*{Acknowledgements}
The authors acknowledge the SMART M3 program and ONR Science of Autonomy grant number N00014-23-1-2354. Shiva Sreeram acknowledges support from the National Science Foundation Graduate Research Fellowship Program. Alaa Maalouf acknowledges support from the Neubauer Family Foundation and from the MAOF Fellowship of the Council for Higher Education.
\bibliographystyle{plainnat}   
\bibliography{references}      

\begin{thebibliography}{101}
\providecommand{\natexlab}[1]{#1}
\providecommand{\url}[1]{\texttt{#1}}
\expandafter\ifx\csname urlstyle\endcsname\relax
  \providecommand{\doi}[1]{doi: #1}\else
  \providecommand{\doi}{doi: \begingroup \urlstyle{rm}\Url}\fi

\bibitem[Adi et~al.(2016)Adi, Kermany, Belinkov, Lavi, and Goldberg]{Adi2016FinegrainedAO}
Yossi Adi, Einat Kermany, Yonatan Belinkov, Ofer Lavi, and Yoav Goldberg.
\newblock Fine-grained analysis of sentence embeddings using auxiliary prediction tasks.
\newblock \emph{ICLR}, abs/1608.04207, 2016.

\bibitem[Aghasi et~al.(2017)Aghasi, Abdi, Nguyen, and Romberg]{aghasi2017net}
Alireza Aghasi, Afshin Abdi, Nam Nguyen, and Justin Romberg.
\newblock Net-trim: Convex pruning of deep neural networks with performance guarantee.
\newblock In \emph{Advances in Neural Information Processing Systems}, pages 3180--3189, 2017.

\bibitem[Alvarez and Salzmann(2017)]{alvarez2017compression}
Jose~M Alvarez and Mathieu Salzmann.
\newblock Compression-aware training of deep networks.
\newblock In \emph{Advances in Neural Information Processing Systems}, pages 856--867, 2017.

\bibitem[Baykal et~al.(2019{\natexlab{a}})Baykal, Liebenwein, Gilitschenski, Feldman, and Rus]{baykal2018datadependent}
Cenk Baykal, Lucas Liebenwein, Igor Gilitschenski, Dan Feldman, and Daniela Rus.
\newblock Data-dependent coresets for compressing neural networks with applications to generalization bounds.
\newblock In \emph{International Conference on Learning Representations}, 2019{\natexlab{a}}.
\newblock URL \url{https://openreview.net/forum?id=HJfwJ2A5KX}.

\bibitem[Baykal et~al.(2019{\natexlab{b}})Baykal, Liebenwein, Gilitschenski, Feldman, and Rus]{sipp2019}
Cenk Baykal, Lucas Liebenwein, Igor Gilitschenski, Dan Feldman, and Daniela Rus.
\newblock Sipping neural networks: Sensitivity-informed provable pruning of neural networks.
\newblock \emph{arXiv preprint arXiv:1910.05422}, 2019{\natexlab{b}}.

\bibitem[Blalock et~al.(2020)Blalock, Gonzalez~Ortiz, Frankle, and Guttag]{blalock2020state}
Davis Blalock, Jose~Javier Gonzalez~Ortiz, Jonathan Frankle, and John Guttag.
\newblock What is the state of neural network pruning?
\newblock In \emph{Proceedings of Machine Learning and Systems 2020}, pages 129--146. 2020.

\bibitem[Bowman et~al.(2015)Bowman, Angeli, Potts, and Manning]{bowman-etal-2015-large}
Samuel~R. Bowman, Gabor Angeli, Christopher Potts, and Christopher~D. Manning.
\newblock A large annotated corpus for learning natural language inference.
\newblock In Llu{\'\i}s M{\`a}rquez, Chris Callison-Burch, and Jian Su, editors, \emph{Proceedings of the 2015 Conference on Empirical Methods in Natural Language Processing}, pages 632--642, Lisbon, Portugal, September 2015. Association for Computational Linguistics.
\newblock \doi{10.18653/v1/D15-1075}.

\bibitem[Brown et~al.(2020)Brown, Mann, Ryder, Subbiah, Kaplan, Dhariwal, Neelakantan, Shyam, Sastry, Askell, et~al.]{brown2020language}
Tom Brown, Benjamin Mann, Nick Ryder, Melanie Subbiah, Jared~D Kaplan, Prafulla Dhariwal, Arvind Neelakantan, Pranav Shyam, Girish Sastry, Amanda Askell, et~al.
\newblock Language models are few-shot learners.
\newblock \emph{Advances in neural information processing systems}, 33:\penalty0 1877--1901, 2020.

\bibitem[Chahine et~al.(2024)Chahine, Quach, Maalouf, Wang, and Rus]{chahine2024flex}
Makram Chahine, Alex Quach, Alaa Maalouf, Tsun-Hsuan Wang, and Daniela Rus.
\newblock Flex: End-to-end text-instructed visual navigation from foundation model features.
\newblock \emph{arXiv preprint arXiv:2410.13002}, 2024.

\bibitem[Chen et~al.(2020)Chen, Chen, and Pan]{chen2020storage}
Jianda Chen, Shangyu Chen, and Sinno~Jialin Pan.
\newblock Storage efficient and dynamic flexible runtime channel pruning via deep reinforcement learning.
\newblock \emph{Advances in Neural Information Processing Systems}, 33, 2020.

\bibitem[Chen et~al.(2015{\natexlab{a}})Chen, Wilson, Tyree, Weinberger, and Chen]{Chen15Hash}
Wenlin Chen, James Wilson, Stephen Tyree, Kilian Weinberger, and Yixin Chen.
\newblock Compressing neural networks with the hashing trick.
\newblock In \emph{International conference on machine learning}, pages 2285--2294, 2015{\natexlab{a}}.

\bibitem[Chen et~al.(2015{\natexlab{b}})Chen, Wilson, Tyree, Weinberger, and Chen]{Chen15Fresh}
Wenlin Chen, James~T. Wilson, Stephen Tyree, Kilian~Q. Weinberger, and Yixin Chen.
\newblock Compressing convolutional neural networks.
\newblock \emph{CoRR}, abs/1506.04449, 2015{\natexlab{b}}.
\newblock URL \url{http://arxiv.org/abs/1506.04449}.

\bibitem[Chung et~al.(2024)Chung, Hou, Longpre, Zoph, Tay, Fedus, Li, Wang, Dehghani, Brahma, et~al.]{chung2024scaling}
Hyung~Won Chung, Le~Hou, Shayne Longpre, Barret Zoph, Yi~Tay, William Fedus, Yunxuan Li, Xuezhi Wang, Mostafa Dehghani, Siddhartha Brahma, et~al.
\newblock Scaling instruction-finetuned language models.
\newblock \emph{Journal of Machine Learning Research}, 25\penalty0 (70):\penalty0 1--53, 2024.

\bibitem[Conneau et~al.(2018)Conneau, Kruszewski, Lample, Barrault, and Baroni]{conneau-etal-2018-cram}
Alexis Conneau, German Kruszewski, Guillaume Lample, Lo{\"\i}c Barrault, and Marco Baroni.
\newblock What you can cram into a single {\$}{\&}!{\#}* vector: Probing sentence embeddings for linguistic properties.
\newblock In \emph{Proceedings of the 56th Annual Meeting of the Association for Computational Linguistics (Volume 1: Long Papers)}, pages 2126--2136, Melbourne, Australia, July 2018. Association for Computational Linguistics.
\newblock \doi{10.18653/v1/P18-1198}.

\bibitem[De-Arteaga et~al.(2019)De-Arteaga, Romanov, Wallach, Chayes, Borgs, Chouldechova, Geyik, Kenthapadi, and Kalai]{De_Arteaga_2019}
Maria De-Arteaga, Alexey Romanov, Hanna Wallach, Jennifer Chayes, Christian Borgs, Alexandra Chouldechova, Sahin Geyik, Krishnaram Kenthapadi, and Adam~Tauman Kalai.
\newblock Bias in bios.
\newblock In \emph{Proceedings of the Conference on Fairness, Accountability, and Transparency}. {ACM}, jan 2019.
\newblock \doi{10.1145/3287560.3287572}.

\bibitem[Denton et~al.(2014)Denton, Zaremba, Bruna, LeCun, and Fergus]{Denton14}
Emily~L Denton, Wojciech Zaremba, Joan Bruna, Yann LeCun, and Rob Fergus.
\newblock Exploiting linear structure within convolutional networks for efficient evaluation.
\newblock In \emph{Advances in neural information processing systems}, pages 1269--1277, 2014.

\bibitem[Dong et~al.(2017{\natexlab{a}})Dong, Chen, and Pan]{dong2017learning}
Xin Dong, Shangyu Chen, and Sinno Pan.
\newblock Learning to prune deep neural networks via layer-wise optimal brain surgeon.
\newblock In \emph{Advances in Neural Information Processing Systems}, pages 4860--4874, 2017{\natexlab{a}}.

\bibitem[Dong et~al.(2017{\natexlab{b}})Dong, Huang, Yang, and Yan]{dong2017more}
Xuanyi Dong, Junshi Huang, Yi~Yang, and Shuicheng Yan.
\newblock More is less: A more complicated network with less inference complexity.
\newblock In \emph{Proceedings of the IEEE Conference on Computer Vision and Pattern Recognition}, pages 5840--5848, 2017{\natexlab{b}}.

\bibitem[Dong et~al.(2023)Dong, Wang, Sreedhar, Wu, and Kuchaiev]{dong2023steerlm}
Yi~Dong, Zhilin Wang, Makesh~Narsimhan Sreedhar, Xianchao Wu, and Oleksii Kuchaiev.
\newblock Steerlm: Attribute conditioned sft as an (user-steerable) alternative to rlhf.
\newblock \emph{arXiv preprint arXiv:2310.05344}, 2023.

\bibitem[Elhage(2021)]{elhage}
N.~Elhage.
\newblock A mathematical framework for transformer circuits.
\newblock In \emph{Proceedings of the 2021 Conference on Empirical Methods in Natural Language Processing}. https: //transformer-circuits.pub/2021/framework/index.html, 2021.

\bibitem[Ettinger et~al.(2016)Ettinger, Elgohary, and Resnik]{ettinger-etal-2016-probing}
Allyson Ettinger, Ahmed Elgohary, and Philip Resnik.
\newblock Probing for semantic evidence of composition by means of simple classification tasks.
\newblock In \emph{Proceedings of the 1st Workshop on Evaluating Vector-Space Representations for {NLP}}, pages 134--139, Berlin, Germany, August 2016. Association for Computational Linguistics.
\newblock \doi{10.18653/v1/W16-2524}.

\bibitem[Gale et~al.(2019)Gale, Elsen, and Hooker]{gale2019state}
Trevor Gale, Erich Elsen, and Sara Hooker.
\newblock The state of sparsity in deep neural networks.
\newblock \emph{arXiv preprint arXiv:1902.09574}, 2019.

\bibitem[Gamboa et~al.(2020)Gamboa, Kudrolli, Dhoot, and Pedram]{gamboa2020campfire}
Noah Gamboa, Kais Kudrolli, Anand Dhoot, and Ardavan Pedram.
\newblock Campfire: Compressible, regularization-free, structured sparse training for hardware accelerators.
\newblock \emph{arXiv preprint arXiv:2001.03253}, 2020.

\bibitem[Geva et~al.(2021)Geva, Schuster, Berant, and Levy]{Geva2021}
Mor Geva, Roei Schuster, Jonathan Berant, and Omer Levy.
\newblock Transformer {Feed-Forward} layers are {Key-Value} memories.
\newblock In \emph{Proceedings of the 2021 Conference on Empirical Methods in Natural Language Processing}, pages 5484--5495, Online and Punta Cana, Dominican Republic, November 2021. Association for Computational Linguistics.

\bibitem[Guo et~al.(2016)Guo, Yao, and Chen]{guo2016dynamic}
Yiwen Guo, Anbang Yao, and Yurong Chen.
\newblock Dynamic network surgery for efficient dnns.
\newblock In \emph{Advances In Neural Information Processing Systems}, pages 1379--1387, 2016.

\bibitem[Han et~al.(2015)Han, Mao, and Dally]{Han15}
Song Han, Huizi Mao, and William~J. Dally.
\newblock Deep compression: Compressing deep neural network with pruning, trained quantization and huffman coding.
\newblock \emph{CoRR}, abs/1510.00149, 2015.
\newblock URL \url{http://arxiv.org/abs/1510.00149}.

\bibitem[Har-Peled and Varadarajan(2002)]{har2002projective}
Sariel Har-Peled and Kasturi Varadarajan.
\newblock Projective clustering in high dimensions using core-sets.
\newblock In \emph{Proceedings of the eighteenth annual symposium on Computational geometry}, pages 312--318, 2002.

\bibitem[Hase et~al.(2023)Hase, Bansal, Kim, and Ghandeharioun]{Hase2023DoesLI}
Peter Hase, Mohit Bansal, Been Kim, and Asma Ghandeharioun.
\newblock Does localization inform editing? surprising differences in causality-based localization vs. knowledge editing in language models.
\newblock In \emph{Thirty-seventh Conference on Neural Information Processing Systems}, 2023.

\bibitem[He et~al.(2018)He, Kang, Dong, Fu, and Yang]{he2018soft}
Yang He, Guoliang Kang, Xuanyi Dong, Yanwei Fu, and Yi~Yang.
\newblock Soft filter pruning for accelerating deep convolutional neural networks.
\newblock In \emph{Proceedings of the 27th International Joint Conference on Artificial Intelligence}, pages 2234--2240. AAAI Press, 2018.

\bibitem[He et~al.(2019)He, Liu, Wang, Hu, and Yang]{he2019filter}
Yang He, Ping Liu, Ziwei Wang, Zhilan Hu, and Yi~Yang.
\newblock Filter pruning via geometric median for deep convolutional neural networks acceleration.
\newblock In \emph{Proceedings of the IEEE Conference on Computer Vision and Pattern Recognition}, pages 4340--4349, 2019.

\bibitem[He et~al.(2017)He, Zhang, and Sun]{he2017channel}
Yihui He, Xiangyu Zhang, and Jian Sun.
\newblock Channel pruning for accelerating very deep neural networks.
\newblock In \emph{Proceedings of the IEEE international conference on computer vision}, pages 1389--1397, 2017.

\bibitem[Houlsby et~al.(2019)Houlsby, Giurgiu, Jastrzebski, Morrone, De~Laroussilhe, Gesmundo, Attariyan, and Gelly]{houlsby2019parameter}
Neil Houlsby, Andrei Giurgiu, Stanislaw Jastrzebski, Bruna Morrone, Quentin De~Laroussilhe, Andrea Gesmundo, Mona Attariyan, and Sylvain Gelly.
\newblock Parameter-efficient transfer learning for nlp.
\newblock In \emph{International conference on machine learning}, pages 2790--2799. PMLR, 2019.

\bibitem[Hu et~al.(2022)Hu, Shen, Wallis, Allen-Zhu, Li, Wang, Wang, Chen, et~al.]{hu2022lora}
Edward~J Hu, Yelong Shen, Phillip Wallis, Zeyuan Allen-Zhu, Yuanzhi Li, Shean Wang, Lu~Wang, Weizhu Chen, et~al.
\newblock Lora: Low-rank adaptation of large language models.
\newblock \emph{ICLR}, 1\penalty0 (2):\penalty0 3, 2022.

\bibitem[Hupkes et~al.(2018)Hupkes, Veldhoen, and Zuidema]{Hupkes2018-sb}
Dieuwke Hupkes, Sara Veldhoen, and Willem Zuidema.
\newblock Visualisation and `diagnostic classifiers' reveal how recurrent and recursive neural networks process hierarchical structure.
\newblock \emph{J. Artif. Intell. Res.}, 61\penalty0 (1):\penalty0 907--926, January 2018.

\bibitem[Iandola et~al.(2016)Iandola, Han, Moskewicz, Ashraf, Dally, and Keutzer]{iandola2016squeezenet}
Forrest~N Iandola, Song Han, Matthew~W Moskewicz, Khalid Ashraf, William~J Dally, and Kurt Keutzer.
\newblock Squeezenet: Alexnet-level accuracy with 50x fewer parameters and< 0.5 mb model size.
\newblock \emph{arXiv preprint arXiv:1602.07360}, 2016.

\bibitem[Ioannou et~al.(2015)Ioannou, Robertson, Shotton, Cipolla, and Criminisi]{ioannou2015training}
Yani Ioannou, Duncan Robertson, Jamie Shotton, Roberto Cipolla, and Antonio Criminisi.
\newblock Training cnns with low-rank filters for efficient image classification.
\newblock \emph{arXiv preprint arXiv:1511.06744}, 2015.

\bibitem[Jaderberg et~al.(2014)Jaderberg, Vedaldi, and Zisserman]{jaderberg2014speeding}
Max Jaderberg, Andrea Vedaldi, and Andrew Zisserman.
\newblock Speeding up convolutional neural networks with low rank expansions.
\newblock In \emph{Proceedings of the British Machine Vision Conference. BMVA Press}, 2014.

\bibitem[Kang and Han(2020)]{kang2020operation}
Minsoo Kang and Bohyung Han.
\newblock Operation-aware soft channel pruning using differentiable masks.
\newblock In \emph{International Conference on Machine Learning}, pages 5122--5131. PMLR, 2020.

\bibitem[Kaplan et~al.(2020)Kaplan, Henighan, Brown, et~al.]{kaplan2020scaling}
Jared Kaplan, Tom Henighan, Tom~B. Brown, et~al.
\newblock Scaling laws for neural language models.
\newblock \emph{arXiv preprint arXiv:2001.08361}, 2020.
\newblock Section 3.3 explains the 6× compute factor (2× forward + 4× backward).

\bibitem[Kerber and Raghvendra(2014)]{kerber2014approximation}
Michael Kerber and Sharath Raghvendra.
\newblock Approximation and streaming algorithms for projective clustering via random projections.
\newblock \emph{arXiv preprint arXiv:1407.2063}, 2014.

\bibitem[Killamsetty et~al.(2021)Killamsetty, Durga, Ramakrishnan, De, and Iyer]{killamsetty2021grad}
Krishnateja Killamsetty, Sivasubramanian Durga, Ganesh Ramakrishnan, Abir De, and Rishabh Iyer.
\newblock Grad-match: Gradient matching based data subset selection for efficient deep model training.
\newblock In \emph{International Conference on Machine Learning}, pages 5464--5474. PMLR, 2021.

\bibitem[Kim et~al.(2015)Kim, Park, Yoo, Choi, Yang, and Shin]{kim2015compression}
Yong-Deok Kim, Eunhyeok Park, Sungjoo Yoo, Taelim Choi, Lu~Yang, and Dongjun Shin.
\newblock Compression of deep convolutional neural networks for fast and low power mobile applications.
\newblock \emph{arXiv preprint arXiv:1511.06530}, 2015.

\bibitem[Kojima et~al.(2022)Kojima, Gu, Reid, Matsuo, and Iwasawa]{kojima2022large}
Takeshi Kojima, Shixiang~Shane Gu, Machel Reid, Yutaka Matsuo, and Yusuke Iwasawa.
\newblock Large language models are zero-shot reasoners.
\newblock \emph{Advances in neural information processing systems}, 35:\penalty0 22199--22213, 2022.

\bibitem[Lebedev and Lempitsky(2016)]{lebedev2016fast}
Vadim Lebedev and Victor Lempitsky.
\newblock Fast convnets using group-wise brain damage.
\newblock In \emph{Computer Vision and Pattern Recognition (CVPR), 2016 IEEE Conference on}, pages 2554--2564. IEEE, 2016.

\bibitem[Lebedev et~al.(2015)Lebedev, Ganin, Rakhuba, Oseledets, and Lempitsky]{lebedev2014speeding}
Vadim Lebedev, Yaroslav Ganin, Maksim Rakhuba, Ivan~V. Oseledets, and Victor~S. Lempitsky.
\newblock Speeding-up convolutional neural networks using fine-tuned cp-decomposition.
\newblock In \emph{ICLR (Poster)}, 2015.
\newblock URL \url{http://arxiv.org/abs/1412.6553}.

\bibitem[LeCun et~al.(1990)LeCun, Denker, and Solla]{lecun1990optimal}
Yann LeCun, John~S Denker, and Sara~A Solla.
\newblock Optimal brain damage.
\newblock In \emph{Advances in neural information processing systems}, pages 598--605, 1990.

\bibitem[Lester et~al.(2021)Lester, Al-Rfou, and Constant]{lester2021power}
Brian Lester, Rami Al-Rfou, and Noah Constant.
\newblock The power of scale for parameter-efficient prompt tuning.
\newblock In \emph{Proceedings of the 2021 Conference on Empirical Methods in Natural Language Processing}. Association for Computational Linguistics, 2021.

\bibitem[Li and Liang(2021)]{li2021prefix}
Xiang~Lisa Li and Percy Liang.
\newblock Prefix-tuning: Optimizing continuous prompts for generation.
\newblock In \emph{Proceedings of the 59th Annual Meeting of the Association for Computational Linguistics and the 11th International Joint Conference on Natural Language Processing (Volume 1: Long Papers)}. Association for Computational Linguistics, 2021.

\bibitem[Li et~al.(2019)Li, Gu, Gool, and Timofte]{li2019learning}
Yawei Li, Shuhang Gu, Luc~Van Gool, and Radu Timofte.
\newblock Learning filter basis for convolutional neural network compression.
\newblock In \emph{Proceedings of the IEEE International Conference on Computer Vision}, pages 5623--5632, 2019.

\bibitem[Liebenwein et~al.(2020)Liebenwein, Baykal, Lang, Feldman, and Rus]{liebenwein2020provable}
Lucas Liebenwein, Cenk Baykal, Harry Lang, Dan Feldman, and Daniela Rus.
\newblock Provable filter pruning for efficient neural networks.
\newblock In \emph{International Conference on Learning Representations}, 2020.
\newblock URL \url{https://openreview.net/forum?id=BJxkOlSYDH}.

\bibitem[Liebenwein et~al.(2021)Liebenwein, Maalouf, Feldman, and Rus]{liebenwein2021compressing}
Lucas Liebenwein, Alaa Maalouf, Dan Feldman, and Daniela Rus.
\newblock Compressing neural networks: Towards determining the optimal layer-wise decomposition.
\newblock \emph{Advances in Neural Information Processing Systems}, 34:\penalty0 5328--5344, 2021.

\bibitem[Lin et~al.(2017)Lin, Rao, Lu, and Zhou]{lin2017runtime}
Ji~Lin, Yongming Rao, Jiwen Lu, and Jie Zhou.
\newblock Runtime neural pruning.
\newblock In \emph{Advances in Neural Information Processing Systems}, pages 2178--2188, 2017.

\bibitem[Lin et~al.(2022)Lin, Hilton, and Evans]{lin2022truthfulqameasuringmodelsmimic}
Stephanie Lin, Jacob Hilton, and Owain Evans.
\newblock Truthfulqa: Measuring how models mimic human falsehoods, 2022.
\newblock URL \url{https://arxiv.org/abs/2109.07958}.

\bibitem[Lin et~al.(2020)Lin, Stich, Barba, Dmitriev, and Jaggi]{Lin2020Dynamic}
Tao Lin, Sebastian~U. Stich, Luis Barba, Daniil Dmitriev, and Martin Jaggi.
\newblock Dynamic model pruning with feedback.
\newblock In \emph{International Conference on Learning Representations}, 2020.
\newblock URL \url{https://openreview.net/forum?id=SJem8lSFwB}.

\bibitem[Liu et~al.(2022{\natexlab{a}})Liu, Tam, Muqeeth, Mohta, Huang, Bansal, and Raffel]{liu2022few}
Haokun Liu, Derek Tam, Mohammed Muqeeth, Jay Mohta, Tenghao Huang, Mohit Bansal, and Colin~A Raffel.
\newblock Few-shot parameter-efficient fine-tuning is better and cheaper than in-context learning.
\newblock \emph{Advances in Neural Information Processing Systems}, 35:\penalty0 1950--1965, 2022{\natexlab{a}}.

\bibitem[Liu et~al.(2022{\natexlab{b}})Liu, Ji, Fu, Tam, Du, Yang, and Tang]{liu2022p}
Xiao Liu, Kaixuan Ji, Yicheng Fu, Weng Tam, Zhengxiao Du, Zhilin Yang, and Jie Tang.
\newblock P-tuning: Prompt tuning can be comparable to fine-tuning across scales and tasks.
\newblock In \emph{Proceedings of the 60th Annual Meeting of the Association for Computational Linguistics (Volume 2: Short Papers)}, pages 61--68, 2022{\natexlab{b}}.

\bibitem[Liu et~al.(2019{\natexlab{a}})Liu, Ott, Goyal, Du, Joshi, Chen, Levy, Lewis, Zettlemoyer, and Stoyanov]{liu2019robertarobustlyoptimizedbert}
Yinhan Liu, Myle Ott, Naman Goyal, Jingfei Du, Mandar Joshi, Danqi Chen, Omer Levy, Mike Lewis, Luke Zettlemoyer, and Veselin Stoyanov.
\newblock Roberta: A robustly optimized bert pretraining approach, 2019{\natexlab{a}}.
\newblock URL \url{https://arxiv.org/abs/1907.11692}.

\bibitem[Liu et~al.(2019{\natexlab{b}})Liu, Mu, Zhang, Guo, Yang, Cheng, and Sun]{liu2019metapruning}
Zechun Liu, Haoyuan Mu, Xiangyu Zhang, Zichao Guo, Xin Yang, Kwang-Ting Cheng, and Jian Sun.
\newblock Metapruning: Meta learning for automatic neural network channel pruning.
\newblock In \emph{Proceedings of the IEEE/CVF International Conference on Computer Vision}, pages 3296--3305, 2019{\natexlab{b}}.

\bibitem[Luo and Wu(2018)]{luo2018autopruner}
Jian-Hao Luo and Jianxin Wu.
\newblock Autopruner: An end-to-end trainable filter pruning method for efficient deep model inference.
\newblock \emph{arXiv preprint arXiv:1805.08941}, 2018.

\bibitem[Maalouf et~al.(2020)Maalouf, Lang, Rus, and Feldman]{maalouf2020deep}
Alaa Maalouf, Harry Lang, Daniela Rus, and Dan Feldman.
\newblock Deep learning meets projective clustering.
\newblock In \emph{International Conference on Learning Representations}, 2020.

\bibitem[Maalouf et~al.(2021{\natexlab{a}})Maalouf, Eini, Mussay, Feldman, and Osadchy]{maalouf2021unified}
Alaa Maalouf, Gilad Eini, Ben Mussay, Dan Feldman, and Margarita Osadchy.
\newblock A unified approach to coreset learning.
\newblock \emph{arXiv preprint arXiv:2111.03044}, 2021{\natexlab{a}}.

\bibitem[Maalouf et~al.(2021{\natexlab{b}})Maalouf, Jubran, and Feldman]{maalouf2021introduction}
Alaa Maalouf, Ibrahim Jubran, and Dan Feldman.
\newblock Introduction to coresets: Approximated mean.
\newblock \emph{arXiv preprint arXiv:2111.03046}, 2021{\natexlab{b}}.

\bibitem[Maalouf et~al.(2022)Maalouf, Gurfinkel, Diker, Gal, Rus, and Feldman]{maalouf2022deep2}
Alaa Maalouf, Yotam Gurfinkel, Barak Diker, Oren Gal, Daniela Rus, and Dan Feldman.
\newblock Deep learning on home drone: Searching for the optimal architecture.
\newblock \emph{arXiv preprint arXiv:2209.11064}, 2022.

\bibitem[Maalouf et~al.(2024)Maalouf, Jadhav, Jatavallabhula, Chahine, Vogt, Wood, Torralba, and Rus]{maalouf2024follow}
Alaa Maalouf, Ninad Jadhav, Krishna~Murthy Jatavallabhula, Makram Chahine, Daniel~M Vogt, Robert~J Wood, Antonio Torralba, and Daniela Rus.
\newblock Follow anything: Open-set detection, tracking, and following in real-time.
\newblock \emph{IEEE Robotics and Automation Letters}, 9\penalty0 (4):\penalty0 3283--3290, 2024.

\bibitem[Meng et~al.(2022)Meng, Bau, Andonian, and Belinkov]{meng2022locating}
Kevin Meng, David Bau, Alex Andonian, and Yonatan Belinkov.
\newblock Locating and editing factual associations in gpt.
\newblock \emph{Advances in neural information processing systems}, 35:\penalty0 17359--17372, 2022.

\bibitem[Meng et~al.(2023)Meng, Bau, Andonian, and Belinkov]{meng2023locatingeditingfactualassociations}
Kevin Meng, David Bau, Alex Andonian, and Yonatan Belinkov.
\newblock Locating and editing factual associations in gpt, 2023.
\newblock URL \url{https://arxiv.org/abs/2202.05262}.

\bibitem[Molchanov et~al.(2017)Molchanov, Tyree, Karras, Aila, and Kautz]{molchanov2016pruning}
Pavlo Molchanov, Stephen Tyree, Tero Karras, Timo Aila, and Jan Kautz.
\newblock Pruning convolutional neural networks for resource efficient inference.
\newblock In \emph{International Conference on Learning Representations}, 2017.

\bibitem[Molchanov et~al.(2019)Molchanov, Mallya, Tyree, Frosio, and Kautz]{molchanov2019importance}
Pavlo Molchanov, Arun Mallya, Stephen Tyree, Iuri Frosio, and Jan Kautz.
\newblock Importance estimation for neural network pruning.
\newblock In \emph{Proceedings of the IEEE Conference on Computer Vision and Pattern Recognition}, pages 11264--11272, 2019.

\bibitem[Ouyang et~al.(2022)Ouyang, Wu, Jiang, Almeida, Wainwright, Mishkin, Zhang, Agarwal, Slama, Ray, et~al.]{ouyang2022training}
Long Ouyang, Jeffrey Wu, Xu~Jiang, Diogo Almeida, Carroll Wainwright, Pamela Mishkin, Chong Zhang, Sandhini Agarwal, Katarina Slama, Alex Ray, et~al.
\newblock Training language models to follow instructions with human feedback.
\newblock \emph{Advances in neural information processing systems}, 35:\penalty0 27730--27744, 2022.

\bibitem[Qi et~al.(2023)Qi, Cao, Rao, Wang, Xiao, and Wang]{qi2023limitation}
Shuhan Qi, Zhengying Cao, Jun Rao, Lei Wang, Jing Xiao, and Xuan Wang.
\newblock What is the limitation of multimodal llms? a deeper look into multimodal llms through prompt probing.
\newblock \emph{Information Processing \& Management}, 60\penalty0 (6):\penalty0 103510, 2023.

\bibitem[Radford et~al.(2019)Radford, Wu, Child, Luan, Amodei, and Sutskever]{radford2019language}
Alec Radford, Jeffrey Wu, Rewon Child, David Luan, Dario Amodei, and Ilya Sutskever.
\newblock Language models are unsupervised multitask learners.
\newblock \emph{OpenAI Blog}, 1\penalty0 (8):\penalty0 9, 2019.

\bibitem[Rafailov et~al.(2023)Rafailov, Sharma, Mitchell, Manning, Ermon, and Finn]{rafailov2023direct}
Rafael Rafailov, Archit Sharma, Eric Mitchell, Christopher~D Manning, Stefano Ermon, and Chelsea Finn.
\newblock Direct preference optimization: Your language model is secretly a reward model.
\newblock \emph{Advances in Neural Information Processing Systems}, 36:\penalty0 53728--53741, 2023.

\bibitem[Renda et~al.(2020)Renda, Frankle, and Carbin]{renda2020comparing}
Alex Renda, Jonathan Frankle, and Michael Carbin.
\newblock Comparing fine-tuning and rewinding in neural network pruning.
\newblock In \emph{International Conference on Learning Representations}, 2020.
\newblock URL \url{https://openreview.net/forum?id=S1gSj0NKvB}.

\bibitem[Sanh et~al.(2022)Sanh, Webson, Raffel, Bach, Sutawika, Alyafeai, Chaffin, Stiegler, Le~Scao, Raja, et~al.]{sanh2022multitask}
Victor Sanh, Albert Webson, Colin Raffel, Stephen~H Bach, Lintang Sutawika, Zaid Alyafeai, Antoine Chaffin, Arnaud Stiegler, Teven Le~Scao, Arun Raja, et~al.
\newblock Multitask prompted training enables zero-shot task generalization.
\newblock In \emph{ICLR 2022-Tenth International Conference on Learning Representations}, 2022.

\bibitem[Sharma et~al.()Sharma, Ash, and Misra]{sharmatruth}
Pratyusha Sharma, Jordan~T Ash, and Dipendra Misra.
\newblock The truth is in there: Improving reasoning in language models with layer-selective rank reduction.
\newblock In \emph{The Twelfth International Conference on Learning Representations}.

\bibitem[Sreeram et~al.(2025)Sreeram, Wang, Maalouf, Rosman, Karaman, and Rus]{sreeram2025probing}
Shiva Sreeram, Tsun-Hsuan Wang, Alaa Maalouf, Guy Rosman, Sertac Karaman, and Daniela Rus.
\newblock Probing multimodal llms as world models for driving.
\newblock \emph{IEEE Robotics and Automation Letters}, 2025.

\bibitem[Stiennon et~al.(2020)Stiennon, Ouyang, Wu, Ziegler, Lowe, Voss, Radford, Amodei, and Christiano]{stiennon2020learning}
Nisan Stiennon, Long Ouyang, Jeffrey Wu, Daniel Ziegler, Ryan Lowe, Chelsea Voss, Alec Radford, Dario Amodei, and Paul~F Christiano.
\newblock Learning to summarize with human feedback.
\newblock \emph{Advances in neural information processing systems}, 33:\penalty0 3008--3021, 2020.

\bibitem[Tai et~al.(2015)Tai, Xiao, Zhang, Wang, et~al.]{tai2015convolutional}
Cheng Tai, Tong Xiao, Yi~Zhang, Xiaogang Wang, et~al.
\newblock Convolutional neural networks with low-rank regularization.
\newblock \emph{arXiv preprint arXiv:1511.06067}, 2015.

\bibitem[Taori et~al.(2023)Taori, Gulrajani, Zhang, Dubois, Li, Guestrin, Liang, and Hashimoto]{taori2023alpaca}
Rohan Taori, Ishaan Gulrajani, Tianyi Zhang, Yann Dubois, Xuechen Li, Carlos Guestrin, Percy Liang, and Tatsunori~B Hashimoto.
\newblock Alpaca: A strong, replicable instruction-following model.
\newblock \emph{Stanford Center for Research on Foundation Models. https://crfm. stanford. edu/2023/03/13/alpaca. html}, 3\penalty0 (6):\penalty0 7, 2023.

\bibitem[Thorne et~al.(2018)Thorne, Vlachos, Christodoulopoulos, and Mittal]{thorne2018feverlargescaledatasetfact}
James Thorne, Andreas Vlachos, Christos Christodoulopoulos, and Arpit Mittal.
\newblock Fever: a large-scale dataset for fact extraction and verification, 2018.
\newblock URL \url{https://arxiv.org/abs/1803.05355}.

\bibitem[Touvron et~al.(2023)Touvron, Lavril, Izacard, Martinet, Lachaux, Lacroix, Rozi{\`e}re, Goyal, Hambro, Azhar, et~al.]{touvron2023llama}
Hugo Touvron, Thibaut Lavril, Gautier Izacard, Xavier Martinet, Marie-Anne Lachaux, Timoth{\'e}e Lacroix, Baptiste Rozi{\`e}re, Naman Goyal, Eric Hambro, Faisal Azhar, et~al.
\newblock Llama: Open and efficient foundation language models.
\newblock \emph{arXiv preprint arXiv:2302.13971}, 2023.

\bibitem[Tukan et~al.(2021)Tukan, Maalouf, Weksler, and Feldman]{tukan2021no}
Murad Tukan, Alaa Maalouf, Matan Weksler, and Dan Feldman.
\newblock No fine-tuning, no cry: Robust svd for compressing deep networks.
\newblock \emph{Sensors}, 21\penalty0 (16):\penalty0 5599, 2021.

\bibitem[Tukan et~al.(2022{\natexlab{a}})Tukan, Mualem, and Maalouf]{tukan2022pruning}
Murad Tukan, Loay Mualem, and Alaa Maalouf.
\newblock Pruning neural networks via coresets and convex geometry: Towards no assumptions.
\newblock \emph{Advances in Neural Information Processing Systems}, 35:\penalty0 38003--38019, 2022{\natexlab{a}}.

\bibitem[Tukan et~al.(2022{\natexlab{b}})Tukan, Wu, Zhou, Braverman, and Feldman]{tukan2022new}
Murad Tukan, Xuan Wu, Samson Zhou, Vladimir Braverman, and Dan Feldman.
\newblock New coresets for projective clustering and applications.
\newblock In \emph{International Conference on Artificial Intelligence and Statistics}, pages 5391--5415. PMLR, 2022{\natexlab{b}}.

\bibitem[Ullrich et~al.(2017)Ullrich, Meeds, and Welling]{ullrich2017soft}
Karen Ullrich, Edward Meeds, and Max Welling.
\newblock Soft weight-sharing for neural network compression.
\newblock \emph{arXiv preprint arXiv:1702.04008}, 2017.

\bibitem[Wang and Komatsuzaki(2022)]{wang2022gpt}
Ben Wang and Aran Komatsuzaki.
\newblock Gpt-j-6b: A 6 billion parameter autoregressive language model. 2021.
\newblock \emph{URL https://github. com/kingoflolz/mesh-transformer-jax}, page~8, 2022.

\bibitem[Wang et~al.(2018)Wang, Zhu, Torralba, and Efros]{wang2018dataset}
Tongzhou Wang, Jun-Yan Zhu, Antonio Torralba, and Alexei~A Efros.
\newblock Dataset distillation.
\newblock \emph{arXiv preprint arXiv:1811.10959}, 2018.

\bibitem[Wang et~al.(2024)Wang, Maalouf, Xiao, Ban, Amini, Rosman, Karaman, and Rus]{wang2024drive}
Tsun-Hsuan Wang, Alaa Maalouf, Wei Xiao, Yutong Ban, Alexander Amini, Guy Rosman, Sertac Karaman, and Daniela Rus.
\newblock Drive anywhere: Generalizable end-to-end autonomous driving with multi-modal foundation models.
\newblock In \emph{2024 IEEE International Conference on Robotics and Automation (ICRA)}, pages 6687--6694. IEEE, 2024.

\bibitem[Wang et~al.(2025)Wang, Maalouf, Xiao, Amini, Karaman, Rus, Ban, Rosman, et~al.]{wang2025generalizable}
Tsun-Hsuan Wang, Alaa Maalouf, Wei Xiao, Alexander Amini, Sertac Karaman, Daniela Rus, Yutong Ban, Guy Rosman, et~al.
\newblock Generalizable end-to-end autonomous driving with multi-modal foundation models, September~11 2025.
\newblock US Patent App. 18/600,866.

\bibitem[Wang et~al.(2023)Wang, Kordi, Mishra, Liu, Smith, Khashabi, and Hajishirzi]{wang2023self}
Yizhong Wang, Yeganeh Kordi, Swaroop Mishra, Alisa Liu, Noah~A Smith, Daniel Khashabi, and Hannaneh Hajishirzi.
\newblock Self-instruct: Aligning language models with self-generated instructions.
\newblock In \emph{Proceedings of the 61st Annual Meeting of the Association for Computational Linguistics (Volume 1: Long Papers)}. Association for Computational Linguistics, 2023.

\bibitem[Wei et~al.()Wei, Bosma, Zhao, Guu, Yu, Lester, Du, Dai, and Le]{weifinetuned}
Jason Wei, Maarten Bosma, Vincent Zhao, Kelvin Guu, Adams~Wei Yu, Brian Lester, Nan Du, Andrew~M Dai, and Quoc~V Le.
\newblock Finetuned language models are zero-shot learners.
\newblock In \emph{International Conference on Learning Representations}.

\bibitem[Wei et~al.(2022)Wei, Wang, Schuurmans, Bosma, Xia, Chi, Le, Zhou, et~al.]{wei2022chain}
Jason Wei, Xuezhi Wang, Dale Schuurmans, Maarten Bosma, Fei Xia, Ed~Chi, Quoc~V Le, Denny Zhou, et~al.
\newblock Chain-of-thought prompting elicits reasoning in large language models.
\newblock \emph{Advances in neural information processing systems}, 35:\penalty0 24824--24837, 2022.

\bibitem[Weinberger et~al.(2009)Weinberger, Dasgupta, Langford, Smola, and Attenberg]{Weinberger09}
Kilian Weinberger, Anirban Dasgupta, John Langford, Alex Smola, and Josh Attenberg.
\newblock Feature hashing for large scale multitask learning.
\newblock In \emph{Proceedings of the 26th annual international conference on machine learning}, pages 1113--1120, 2009.

\bibitem[Yang et~al.(2018)Yang, Qi, Zhang, Bengio, Cohen, Salakhutdinov, and Manning]{yang2018hotpotqadatasetdiverseexplainable}
Zhilin Yang, Peng Qi, Saizheng Zhang, Yoshua Bengio, William~W. Cohen, Ruslan Salakhutdinov, and Christopher~D. Manning.
\newblock Hotpotqa: A dataset for diverse, explainable multi-hop question answering, 2018.
\newblock URL \url{https://arxiv.org/abs/1809.09600}.

\bibitem[Ye et~al.(2018)Ye, Lu, Lin, and Wang]{ye2018rethinking}
Jianbo Ye, Xin Lu, Zhe Lin, and James~Z. Wang.
\newblock Rethinking the smaller-norm-less-informative assumption in channel pruning of convolution layers.
\newblock In \emph{International Conference on Learning Representations}, 2018.
\newblock URL \url{https://openreview.net/forum?id=HJ94fqApW}.

\bibitem[Ye et~al.(2020)Ye, Gong, Nie, Zhou, Klivans, and Liu]{ye2020good}
Mao Ye, Chengyue Gong, Lizhen Nie, Denny Zhou, Adam Klivans, and Qiang Liu.
\newblock Good subnetworks provably exist: Pruning via greedy forward selection.
\newblock In \emph{International Conference on Machine Learning}, pages 10820--10830. PMLR, 2020.

\bibitem[Yu et~al.(2018)Yu, Li, Chen, Lai, Morariu, Han, Gao, Lin, and Davis]{yu2018nisp}
Ruichi Yu, Ang Li, Chun-Fu Chen, Jui-Hsin Lai, Vlad~I Morariu, Xintong Han, Mingfei Gao, Ching-Yung Lin, and Larry~S Davis.
\newblock Nisp: Pruning networks using neuron importance score propagation.
\newblock In \emph{Proceedings of the IEEE Conference on Computer Vision and Pattern Recognition}, pages 9194--9203, 2018.

\bibitem[Yu et~al.(2017)Yu, Liu, Wang, and Tao]{yu2017compressing}
Xiyu Yu, Tongliang Liu, Xinchao Wang, and Dacheng Tao.
\newblock On compressing deep models by low rank and sparse decomposition.
\newblock In \emph{Proceedings of the IEEE Conference on Computer Vision and Pattern Recognition}, pages 7370--7379, 2017.

\bibitem[Zhao et~al.(2025)Zhao, K{\"o}ksal, Modarressi, Hedderich, and Sch{\"u}tze]{zhao2025we}
Raoyuan Zhao, Abdullatif K{\"o}ksal, Ali Modarressi, Michael~A Hedderich, and Hinrich Sch{\"u}tze.
\newblock Do we know what llms don't know? a study of consistency in knowledge probing.
\newblock \emph{arXiv preprint arXiv:2505.21701}, 2025.

\bibitem[Zhao et~al.(2021)Zhao, Pascual, Brunner, and Wattenhofer]{Zhao2021of}
Sumu Zhao, Damian Pascual, Gino Brunner, and Roger Wattenhofer.
\newblock {Of Non-Linearity and Commutativity in BERT}.
\newblock In \emph{{International Joint Conference on Neural Networks (IJCNN), Virtual-only}}, July 2021.

\bibitem[Ziegler et~al.(2019)Ziegler, Stiennon, Wu, Brown, Radford, Amodei, Christiano, and Irving]{ziegler2019fine}
Daniel~M Ziegler, Nisan Stiennon, Jeffrey Wu, Tom~B Brown, Alec Radford, Dario Amodei, Paul Christiano, and Geoffrey Irving.
\newblock Fine-tuning language models from human preferences.
\newblock \emph{arXiv preprint arXiv:1909.08593}, 2019.

\end{thebibliography}


\appendix

\section{Technical Appendices and Supplementary Material}

\subsection{Specific compute times}
\begin{table}[h]
\centering
\scriptsize
\caption{Dataset sizes.}
\resizebox{0.45\columnwidth}{!}{%
\begin{tabular}{l r}
\toprule
Dataset Name & Dataset Size \\
\midrule
CounterFact                       & 65\,757 \\
HotPotQA                          & 14\,618 \\
FEVER                             & 13\,086 \\
Bios Gender                       & 39\,642 \\
Bios Profession                   & 19\,223 \\
TruthfulQA                        & 5\,882  \\
BigBench–Epistemic Reasoning      & 2\,000  \\
BigBench–WikidataQA               & 20\,321 \\
\bottomrule
\end{tabular}}
\label{tab:dataset-sizes}
\end{table}
Let us define the more precise compute times for the approaches. Let us refer to the size of each dataset as $d$ with the value of $d$ for each dataset given in Table~\ref{tab:dataset-sizes}. We then have:

Original LASER for GPT-J and Roberta respectively:
\begin{equation}
    \underset{\text{layers}}{28} \times \underset{\text{in/out matrices}}{2} \times \underset{\text{rates}}{9} \times \underset{\text{validation size}}{0.2 d} + \underset{\text{for $0\%$ compression}}{0.2 d} + \underset{\text{test size}}{0.8 d} = 101.8d
\end{equation}

\begin{equation}
    \underset{\text{layers}}{12} \times \underset{\text{in/out matrices}}{2} \times \underset{\text{rates}}{9} \times \underset{\text{validation size}}{0.2 d} + \underset{\text{for $0\%$ compression}}{0.2 d} + \underset{\text{test size}}{0.8 d} = 44.2d
\end{equation}

LASER Gradients Standard Evaluation:
\begin{equation}
\underset{\text{gradient step search}}{2.5 \times 0.2 d} + \times \underset{\text{top choices}}{5} \times \underset{\text{rates}}{9}\times 0.2 d  + \underset{\text{for $0\%$ compression}}{0.2 d} + \underset{\text{test size}}{0.8 d}= 10.5d
\end{equation}
LASER 100 Evaluation for GPT-J and Roberta respectively:
\begin{equation}
\underset{\text{layers}}{28} \times \underset{\text{in/out matrices}}{2} \times \underset{\text{rates}}{9} \times \underset{\text{validation size}}{100} + \underset{\text{for $0\%$ compression}}{100} + \underset{\text{test size}}{0.8 d} = 50500 + 0.8 d
\end{equation}
\begin{equation}
\underset{\text{layers}}{12} \times \underset{\text{in/out matrices}}{2} \times \underset{\text{rates}}{9} \times \underset{\text{validation size}}{100} + \underset{\text{for $0\%$ compression}}{100} + \underset{\text{test size}}{0.8 d} = 21700 + 0.8 d
\end{equation}
LASER 100 Gradients Standard Evaluation:
\begin{equation}
\underset{\text{gradient step search}}{2.5 \times 100} + \times \underset{\text{top choices}}{5} \times \underset{\text{rates}}{9}\times 0.2 d  + \underset{\text{for $0\%$ compression}}{0.2 d} + \underset{\text{test size}}{0.8 d}= 250+10d
\end{equation}
LASER 100 Gradients 100 Evaluation:
\begin{equation}
\underset{\text{gradient step search}}{2.5 \times 100} + \times \underset{\text{top choices}}{5} \times \underset{\text{rates}}{9}\times 100  + \underset{\text{for $0\%$ compression}}{100} + \underset{\text{test size}}{0.8 d}= 4850 + 0.8d
\end{equation}
Clustering LASER for GPT-J and Roberta respectively:
\begin{equation}
\underset{\text{layers}}{28} \times \underset{\text{in/out matrices}}{2} \times \underset{\text{rates}}{9} \times \underset{\text{clustering levels}}{5} \times \underset{\text{validation size}}{0.2 d} + \underset{\text{for $0\%$ compression}}{0.2 d} + \underset{\text{test size}}{0.8 d} = 505d
\end{equation}
\begin{equation}
\underset{\text{layers}}{12} \times \underset{\text{in/out matrices}}{2} \times \underset{\text{rates}}{9} \times \underset{\text{clustering levels}}{5} \times \underset{\text{validation size}}{0.2 d} + \underset{\text{for $0\%$ compression}}{0.2 d} + \underset{\text{test size}}{0.8 d} = 217d
\end{equation}
Clustering LASER 100 Gradients Standard Evaluation:
\begin{equation}
\underset{\text{gradient step search}}{2.5 \times 4 \times 100} + \times \underset{\text{top choices}}{7} \times\underset{\text{clustering levels}}{4} \times \underset{\text{rates}} {9}\times 0.2 d  + \underset{\text{for $0\%$ compression}}{0.2 d} + \underset{\text{test size}}{0.8 d}= 1000+ 51.4d
\end{equation}
Clustering LASER 100 Gradients 100 Evaluation for GPT-J and Roberta respectively:
\begin{equation}
\underset{\text{gradient step search}}{2.5 \times 4 \times 100} + \times \underset{\text{top choices}}{5} \times\underset{\text{clustering levels}}{4} \times \underset{\text{rates}} {9}\times 100  + \underset{\text{for $0\%$ compression}}{100} + \underset{\text{test size}}{0.8 d}= 19100 + 0.8d
\label{eq:gptj-100grads100eval}
\end{equation}
\begin{equation}
\underset{\text{gradient step search}}{2.5 \times 4 \times 100} + \times \underset{\text{top choices}}{7} \times\underset{\text{clustering levels}}{4} \times \underset{\text{rates}} {9}\times 100  + \underset{\text{for $0\%$ compression}}{100} + \underset{\text{test size}}{0.8 d}= 26300 + 0.8d
\end{equation}

\subsection{Parameters for Tables ~\ref{tab:gptj-clustering-eff} and~\ref{tab:roberta-clustering-eff}}
We provide the parameters that obtained the results for the tables in Table~\ref{tab:eff-clustering-params}.
\begin{table}[h]
\centering
\scriptsize
\caption{Parameters for each dataset and model with the efficient Clustering LASER approaches
         \([\tau,\,\ell,\,\rho,\,k]\) being matrix, layer number, percent of matrix original rank remaining, and clustering level.}
\resizebox{\columnwidth}{!}{%
\begin{tabular}{l cc cc}
\toprule
\multirow{2}{*}{Dataset} &
\multicolumn{2}{c}{\textbf{Roberta}} &
\multicolumn{2}{c}{\textbf{GPT-J}} \\
\cmidrule(lr){2-3}\cmidrule(lr){4-5}
& CL\textendash100G\textendash SE & CL\textendash100G\textendash100E
& CL\textendash100G\textendash SE & CL\textendash100G\textendash100E \\
\midrule
CounterFact                       & \([U_{in},\,  8,\,0.8,\,1]\) & \([U_{out},\,  9,\,0.9,\,8]\) &
                                     \([U_{in},\, 27,\,0.005,\,4]\) & \([U_{in},\, 27,\,0.01,\,8]\) \\
HotPotQA                          & \([U_{out},\, 9,\,0.9,\,8]\) & \([U_{out},\, 4,\,0.8,\,1]\) &
                                     \([U_{in},\, 27,\,0.6,\,16]\)  & \([U_{in},\, 27,\,0.1,\,4]\)  \\
FEVER                             & \([U_{in},\,  4,\,0.8,\,2]\) & \([U_{in},\,  4,\,0.8,\,2]\) &
                                     \([U_{in},\, 6,\,0.01,\,2]\) & \([U_{out},\, 6,\,0.1,\,4]\) \\
Bios Gender                       & \([U_{in},\,  9,\,0.4,\,2]\) & \([U_{in},\,  10,\,0.01,\,2]\) &
                                     \([U_{in},\, 11,\,0.005,\,2]\) & \([U_{in},\, 11,\,0.005,\,2]\) \\
Bios Profession                   & \([U_{in},\,  3,\,0.6,\,4]\) & \([U_{in},\,  3,\,0.6,\,4]\) &
                                     \([U_{out},\, 18,\,0.005,\,8]\) & \([U_{in},\,9,\,0.8,\,16]\)  \\
BigBench–Epistemic Reasoning      & \([U_{out},\,1,\,0.4,\,1]\)  & \([U_{out},\,10,\,0.4,\,4]\)  &
                                     \([U_{in},\, 7,\,0.005,\,4]\) & \([U_{in},\,7,\,0.01,\,16]\) \\
TruthfulQA                        & \([U_{in},\, 0,\,0.05,\,2]\) & \([U_{in},\, 2,\,0.6,\,1]\) &
                                     \([U_{in},\,  7,\,0.4,\,16]\) & \([N/A,\,N/A,\,1.0,\,1]\) \\
BigBench–WikidataQA               & \([U_{out},\, 10,\,0.05,\,8]\)  & \([U_{out},\, 10,\,0.4,\,8]\)  &
                                     \([U_{in},\, 27,\,0.01,\,2]\) & \([U_{in},\,27,\,0.01,\,2]\) \\
\bottomrule
\end{tabular}}
\label{tab:eff-clustering-params}
\end{table}

\subsection{Parameters for Table ~\ref{tab:clustering-perf}}
We provide the parameters that achieved our best Clustering LASER results in Table~\ref{tab:clustering-params}.
\begin{table}[h]
\centering
\scriptsize
\caption{Parameters for each dataset and model with the Clustering LASER approaches
         \([\tau,\,\ell,\,\rho,\,k]\) being matrix, layer number, percent of matrix original rank remaining, and clustering level.}
\resizebox{0.7\columnwidth}{!}{%
\begin{tabular}{l cc}
\toprule
\multirow{2}{*}{Dataset} &
\multicolumn{1}{c}{\textbf{Roberta}} &
\multicolumn{1}{c}{\textbf{GPT-J}} \\
\cmidrule(lr){2-2}\cmidrule(lr){3-3}
& \text{Clustering LASER} & \text{Clustering LASER} \\
\midrule
CounterFact                      & \([U_{in},\,8,\,0.8,\,1]\) & \([U_{in},\,27,\,0.05,\,2]\) \\
HotPotQA                         & \([U_{in},\,1,\,0.9,\,16]\) & \([U_{out},\,27,\,0.005,\,8]\) \\
FEVER                            & \([U_{in},\,4,\,0.8,\,2]\) & \([U_{in},\,10,\,0.05,\,4]\) \\
Bios Gender                      & \([U_{in},\,9,\,0.9,\,1]\) & \([U_{in},\,14,\,0.005,\,16]\) \\
Bios Profession                  & \([U_{in},\,3,\,0.6,\,4]\) & \([U_{in},\,18,\,0.005,\,16]\) \\
BigBench–Epistemic Reasoning     & \([U_{out},\,1,\,0.4,\,1]\) & \([U_{in},\,7,\,0.005,\,8]\) \\
TruthfulQA                       & \([U_{in},\,0,\,0.05,\,4]\) & \([U_{in},\,7,\,0.2,\,8]\) \\
BigBench–WikidataQA              & \([U_{in},\,7,\,0.6,\,16]\) & \([U_{in},\,27,\,0.01,\,2]\) \\
\bottomrule
\end{tabular}}
\label{tab:clustering-params}
\end{table}

\subsection{Justification for methodology of considering last 20 values in gradient search}
Our use of twenty values from the diagonal is the result of extensive experimentation but we provide the following to juestify our approach:

We apply our algorithm to obtain the top five candidate matrices (the primary number of matrices we use in our evaluations) from not only considering the last twenty singular values of the gradient, but also \textbf{ten}, \textbf{sixty}, and \textbf{one hundred} across for FEVER, Bios Gender, BigBench-Epistemic Reasoning, and TruthfulQA on GPT-J. 
This experiment yielded the same top five matrices (though the ordering within the top five may change) for the given dataset when considering sixty and one hundred, but can be different for ten. For example, on GPT-J FEVER, the normal top five is: layer 27 $U_{in}$, layer 5 $U_{in}$, layer 26 $U_{in}$, layer 6 $U_{in}$, and layer 7 $U_{in}$. However, with only considering ten along the diagonal, this becomes: layer 27 $U_{in}$, layer 5 $U_{in}$, layer 26 $U_{in}$, layer 6 $U_{in}$, and layer 25 $U_{in}$ (this last one changing). Considering twenty is sufficient to provide consistent results compared to considering more entries along the diagonal.

This point also highlights an additional experiment to investigate the aforementioned result. Once again we investigate the aforementioned datasets with GPT-J. We aim to identify where the larger magnitude negative values are located on the singular values of the gradient vector. We consider the optimal matrix corresponding to the dataset (the one listed in Table~\ref{tab:eff-clustering-params}). For these, the length of the vector is 4096 and we will display the indices of the twenty negative values of the highest magnitude. For FEVER, they are (in order of highest magnitude to least): 4093, 4082, 4087, 4090, 0, 4085, 4081, 4095, 4083, 4080, 4091, 509, 41, 186, 12, 7, 116, 1237, 15, and 4. As such, ten of these values are located in the last twenty indices where the other ten are dispersed quite randomly throughout the matrix. 
Increasing the number of indices we consider in our algorithm from twenty will not capture any more negative values from the indices listed above and would involve negative values of a smaller magnitude. 
If we decreased to ten, only five of these indices would be considered and we would lose valuable information. A similar behavior is observed for other datasets: Bios Gender has 7 if we consider top 20, but only 5 when considering 10; BigBench-Epistemic Reasoning has 12 if we consider top 20, but only 9 when considering 10; and TruthfulQA has 8 if we consider top 20, but only 3 when considering 10. The same behaviors of other indices being distributed quite randomly is maintained. 

\subsection{Considering different matrices in a transformer block}
Here, we provide justification for our use of $U_{in}$ and $U_{out}$ in our experimentation. To begin, our choice of considering the MLP input and output matrices follows the original LASER work~\citep{sharmatruth} where it was shown in Figure 2 (which focused on GPT-J on the CounterFact dataset) that while performance wasn’t particularly harmed, there were little to no performance gains to applying the technique to attention matrices. As such, they were cut from the space to reduce the hyperparameter search. 
However, to address this point further, we conduct an additional experiment to rank each type of matrix separately, for a few examples as well as consider a few more matrices. We applied the LASER 100 Grads 100 Eval approach from Table~\ref{tab:gptj-eff} on CounterFact and BigBench-Epistemic Reasoning. 
We considered the top 5 layers separately for $fc_{in}$, $fc_{out}$, $k_{proj}$, and $q_{proj}$ (the last two being from the attention layers).  Disregarding that considering more matrices will reduce the speedup found, we find that the result of running on all of the top matrices calculated still highlights the highest accuracy to be from the same layer and matrix combination that yielded the result in Table~\ref{tab:gptj-eff}. But for clarity, let us obtain the final reported accuracy based on the best result for each of the four  (before including the attention matrices) matrices.
Here we provide the results in Table~\ref{tab:gptj-matrix-percentages}.
\begin{table}[h]
\centering
\scriptsize
\caption{Matrix-level 100 Grads 100 Eval percentages for GPT-J across CounterFact and BBH-ER.}
\resizebox{0.5\columnwidth}{!}{%
\begin{tabular}{l cc}
\toprule
\multirow{2}{*}{Matrix} &
\multicolumn{1}{c}{\textbf{GPT-J CounterFact}} &
\multicolumn{1}{c}{\textbf{GPT-J BBH-ER}} \\
\cmidrule(lr){2-2}\cmidrule(lr){3-3}
& \% & \% \\
\midrule
$fc_{in}$   & 23.2 & 62.9 \\
$fc_{out}$  & 13.0 & 37.1 \\
$k_{proj}$  & 13.1 & 37.1 \\
$q_{proj}$  & 13.3 & 37.1 \\
\bottomrule
\end{tabular}}
\label{tab:gptj-matrix-percentages}
\end{table}

This showcases that in these added cases (when considering the attention layers/other matrices), it is easy for the model to revert to baseline accuracy. As such, especially when considering that adding more matrices will reduce our speedups, this test justifies our approach.

\subsection{No correlation between gradient diagonal values and singular values}
\begin{figure}[h]
    \centering

    \includegraphics[width=0.7\linewidth]{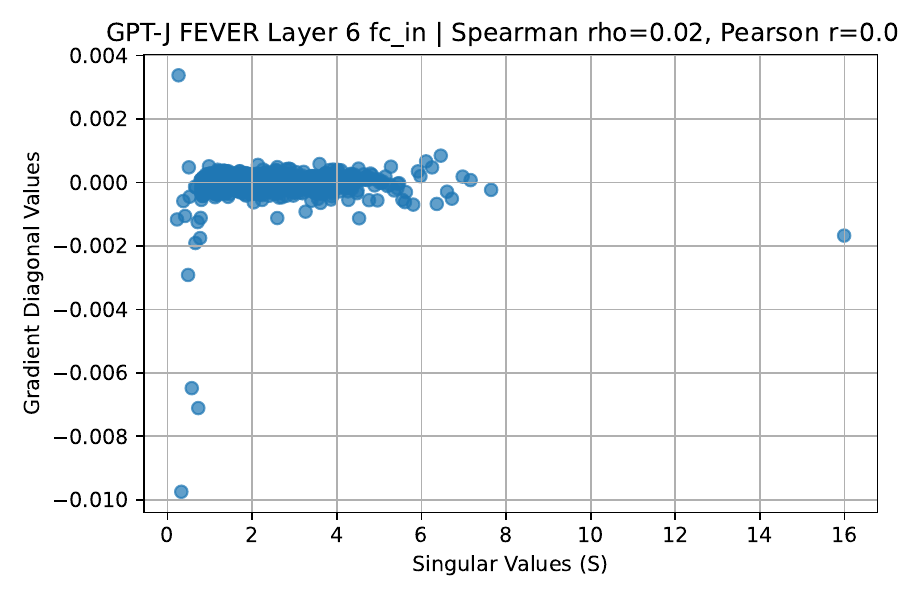}
    \caption{Comparison of singular values versus gradient diagonal values for the optimal matrix result on GPT-J FEVER. The correlation statistics are also provided.}
    \label{fig:correlation}
\end{figure}
We conduct an experiment to show that there is indeed no correlation between gradient diagonal values and singular values and thus no concern that large-magnitude negative gradients arise from large singular values that are not pruned. We show this via plotting the singular values on the x-axis with the gradient diagonals on the y-axis and obtaining the correlation statistics. We test this on four datasets with GPT-J where we look at the optimal matrix corresponding to the dataset (the one listed in Table~\ref{tab:eff-clustering-params}). Let us now consider the correlation statistics. For FEVER, we have a Spearman rho of 0.0221 and Pearson r of 0.0021 (this plot is given as an example in Figure~\ref{fig:correlation}). For Bios Gender, we have a Spearman rho of 0.0353 and Pearson r of -0.0098. For BigBench-Epistemic Reasoning, we have a Spearman rho of 0.0150 and Pearson r of 0.1311. And finally for TruthfulQA, we have a Spearman rho of 0.0064 and Pearson r of 0.0626. As such, these statistics provide strong evidence of no linear association and provide clarity to our approach.

\subsection{Clustering heuristic considerations}
\label{subsec:clustering-heuristic}
Our choice of heuristic is motivated by the following:
\begin{enumerate}
    \item Computational hardness. Exact optimization is NP‑hard (when k and j are part of the input), for every k>=2 and j>=1, and remains intractable in higher dimensions as shown in \cite{tukan2022new}. Therefore, an exact solver is incompatible with our real‑time constraints.
    \item Approximation algorithms. PTAS‑type and streaming approximations exist—e.g. the the PTAS of \cite{har2002projective} and the random‑projection scheme of \cite{kerber2014approximation}—as well as several coreset‑based PTASs (see below). However, all require polynomial (often a very high‑degree) time in n, d, j, k and/or $\epsilon$ (the approximation error), which is prohibitive for our target settings.
    \item Coreset constructions.  Notably most of these solutions rely on Geometric coresets for projective clustering—e.g. \cite{har2002projective} and the recent strong coreset of \cite{tukan2022new} — the idea is to reduce data size, but building the coreset itself still needs polynomial runtime solution in n, d, k and/or $\epsilon$, giving runtimes that exceed our budget.
    \item Finally, while these theoretical results are impressive, offering robust guarantees and elegant formulations, they conceal large constants in the Big-O runtime notation. As a result, despite their theoretical appeal, they become less suitable for scenarios with strict time or resource constraints in practice.
\end{enumerate}

\subsection{Additional dataset information}
Here, we provide additional information regarding each of the datasets employed, particularly with regards to their respective prompting styles.
\begin{enumerate}
    \item CounterFact. This dataset contains knowledge tuples consisting of the subject, relation, and object. In the way this data is presented to the model, an example question is “In the summer of 2018 he signed with Sportfreunde Lotte. The Tracy Morgan show was released on” with the answer of “NBC.”
    \item Hotpot. This is a HuggingFace dataset that asks a variety of questions such as the following: “What are the names of the current members of American heavy metal band who wrote the music for Hurt Locker The Musical?” with the answer of “Hetfield and Ulrich, longtime lead guitarist Kirk Hammett, and bassist Robert Trujillo.”
    \item FEVER. This dataset contains a variety of claims constructed from Wikipedia. The answers to these claims are either true or false. An example would be “Roman Atwood is a content creator.” with an answer of “true.”
    \item Bias in Bios. We take two datasets from here. Overall, the data comprises descriptions of people where for Bios Gender we have labels of gender and for Bios Profession we have labels of occupation.  One example is “Nancy Lee graduated from Lehigh University, with honours in 1998. Nancy has years of experience in weight loss surgery, patient support, education, and diabetes.” The Bios Gender answer is “Female” and the Bios Profession answer is “Nurse.”
    \item TruthfulQA. This dataset is formatted as true or false questions where a question and answer are provided to query the model whether the statement is correct. An example is “What can you learn about the future from your dreams? There is no way to predict the future. Is this statement true or false? This statement is” which has a label of 1 (corresponding to “true”).
    \item BigBench-Epistemic Reasoning. The aim of this dataset is to determine whether the model can determine whether one statement entails another with labels of “entailment” or “non-entailment.” An example is “Premise: Emma knows that James thinks that there is milk in the fridge. Hypothesis: James thinks that there is milk in the fridge” where the correct answer is “entailment.”
    \item BigBench-WikidataQA. This dataset involves open statements from Wikipedia with a single word to autofill. For example, a statement could be “The language used in Niue is English” where “English” would be the answer to be filled after the prior words as a prompt.
\end{enumerate}

\subsection{The use of 100 datapoints}
Throughout the work, we have employed 100 datapoints to achieve our results. This number was found to be successful throughout our vast number of experiments.
However, to justify this point, we highlight why 100 was a sufficient number and provide an experiment as to why 100 is not too large of a number. For the first point, note that the accuracies shown for the LASER Grads Std Eval and LASER 100 Grads Std Eval columns in Table~\ref{tab:gptj-eff} are the exact same. The reason for this is noted under the “Evaluation with just 100 gradient steps” subheading within Section~\ref{sec:experiments} where we state that the top five proposed matrices remain the same when considering 100 datapoints versus the entire 20\%. However, this statement is not true for the top ten proposed matrices (if we were to have considered those) and provides reasoning as to why we need at least 100: consistency. Reducing the number of data points much below 100 can lead to a mismatch in the top five proposed matrices compared to the entire 20\%. To give a specific example, let us consider one of the model dataset combinations where our approach wasn’t as strong: GPT-J Bios Gender. If we drop the number to 80 datapoints, the previous best matrix, $U_{in}$ with 2 clusters of layer 11, falls out of the top five into the sixth position. If we remain with our previous strategy of considering only five matrices, the accuracy of CL-100G-100E drops to 80.5\% which is a considerably worse result. If we try to maintain accuracy by considering the top six matrices instead, we will harm the speedup of our algorithm across the entire column in Table~\ref{tab:gptj-clustering-eff} (looking at equation~\ref{eq:gptj-100grads100eval}, it would become 22500 + 0.8d instead). As such, 100 datapoints was the appropriate number for our approach.

In addition to this experimental point, we also provide a theoretical explanation for the 100 points. For this, we refer to a lemma in the work from \cite{maalouf2021introduction}. In the work, Lemma 8.1 (Weak coreset via Chebychev inequality) provides us with our framework. It states: let $P$ be a set of $n$ points in $\mathbb{R}^d$, with $\mu = \frac{1}{n}\sum_{p\in P} p$, and $\sigma^2 = \frac{1}{n}\sum_{p\in P}||p - \mu||^2$. Let $\epsilon,\delta\in (0, 1)$, and let $S$ be a sample of $m = \frac{1}{\epsilon\delta}$ points chosen i.i.d uniformly at random from $P$. Then, with probability at least $1-\delta$ we have that $\left|\left|\frac{1}{m}\sum_{p\in S}p-\mu\right|\right|^2 \leq \epsilon\sigma^2$. The proof for this lemma is given in the aforementioned work. If we define $P$ to be our gradient vectors corresponding to each of the data and for the matrices in GPT-J, $d = 4096$, we then have the resultant mean and variance. Now if we let $\epsilon = \delta = 0.1$, then $m = 100$ and the bound is given to be 0.1$\sigma^2$ (i,e, 0.1 within the variance). 
For completeness we validated the variance so if we continue with our GPT-J Bios Gender example with 20\% of the dataset, let us consider layer 11’s $U_{in}$ matrix and we find a variance of 0.00014 as desired.

\subsection{Broader Impacts}
\textbf{Potential benefits to society.} Our contributions have the potential to:
\begin{enumerate}
    \item \textbf{Lower carbon footprint.} With techniques that remove exhaustive layer-by-layer sweeps an no training time, they can reduce the electricity usage of GPUs.
    \item \textbf{Access.} With the ability to apply these techniques to one GPU, more individuals would have access to the strength of these models.
\end{enumerate}
\textbf{Potential risks to society. } On the other hand, our contributions also have the potential to:
\begin{enumerate}
    \item \textbf{Access. } This time as a negative as easier access lowers the barrier to entry for those who may want to misuse these models for nefarious goals such as extremist propaganda.
    \item \textbf{Security. } Selectively editing ranks may result in loss of security measures implemented in the models.
\end{enumerate}

\end{document}